\DeclareMathOperator{\sgn}{sgn}
\DeclareMathOperator{\argmin}{argmin}
\newtheorem{theorem}{Theorem}
\numberwithin{theorem}{section}
\newtheorem{lemma}{Lemma}
\numberwithin{lemma}{section}
\newtheorem{corollary}{Corollary}
\numberwithin{corollary}{section}
\newcommand{\bx}{{\mathbf x}}
\newcommand{\bv}{{\mathbf v}}
\newcommand{\bX}{{\mathbf X}}
\newcommand{\bz}{{\mathbf z}}
\newcommand{\bw}{{\mathbf w}}
\newcommand{\ba}{{\mathbf a}}
\newcommand{\bs}{{\mathbf s}}
\newcommand{\iR}{\mathbb{R}}
\newcommand{\cH}{\mathcal{H}}
\newcommand{\iX}{\mathbb{X}}
\newcommand{\iE}{\mathbb{E}}
\newcommand{\iO}{\vec{\mathcal{O}}}
\newcommand{\OO}{{\mathcal{O}}}
\begin{document}

\title{Theory of Curriculum Learning, with Convex Loss Functions}

\author{\name Daphna Weinshall\thanks{corresponding author} \email daphna@mail.huji.ac.il \\
       \addr School of Computer Science and Engineering\\
       Hebrew University of Jerusalem\\
       Jerusalem 91904, Israel
       \AND
       \name Dan Amir \email dan.amir@mail.huji.ac.il \\
       \addr School of Computer Science and Engineering\\
       Hebrew University of Jerusalem\\
       Jerusalem 91904, Israel}

\editor{}
\maketitle

\begin{abstract}

Curriculum Learning - the idea of teaching by gradually exposing the learner to examples in a meaningful order, from easy to hard, has been investigated in the context of machine learning long ago. Although methods based on this concept have been empirically shown to improve performance of several learning algorithms, no theoretical analysis has been provided even for simple cases. To address this shortfall, we start by formulating an ideal definition of difficulty score - the loss of the optimal hypothesis at a given datapoint. We analyze the possible contribution of curriculum learning based on this score in two convex problems - linear regression, and binary classification by hinge loss minimization. We show that in both cases, the expected convergence rate decreases monotonically with the ideal difficulty score, in accordance with earlier empirical results. We also prove that when the ideal difficulty score is fixed, the convergence rate is monotonically increasing with respect to the loss of the current hypothesis at each point. We discuss how these results bring to term two apparently contradicting heuristics: curriculum learning on the one hand, and hard data mining on the other.

\end{abstract}

\medskip
\noindent\textbf{Keywords:} curriculum learning, empirical loss minimization, stochastic gradient descent, linear regression, hinge loss minimization
\medskip

\section{Introduction}
\label{sec:intro}

Many popular machine learning algorithms involve sampling of examples from a large labeled data set and gradually improving the model performance on those examples. In particular, any algorithm which employs Stochastic Gradient Descent (SGD) falls under this category. In the standard and most common form of SGD, examples are drawn uniformly from the data. This approach is well justified theoretically as it guarantees that the expected value of the gradient in each step equals the gradient of the empirical loss. 

Although this approach is both simple and theoretically sound, it differs dramatically from our intuition of how living organisms learn from examples. Both humans and animals usually benefit from seeing examples in a meaningful order defined by some curriculum. The efficacy of learning new concepts is usually improved or even only made possible when the learner is exposed to gradually more difficult examples or more complex concepts. The learner usually uses the easier examples to acquire capabilities which will allow the grasping of the more complex ones. This concept is well grounded in cognitive research, where it was investigated within both a behavioral approach \citep[e.g.][]{skinner1990behavior} and a computational approach \citep[e.g.][]{elman1993learning}.

The idea of incorporating the concept of curriculum learning into the framework of supervised machine learning has been introduced early on \citep[e.g.][]{ sanger1994neural}, while being identified as a key challenge for machine learning throughout \citep{mitchell1980need,mitchell2006discipline,wang2015basic}. Several formulations have been suggested both in the context of SGD \citep{bengio2009curriculum} and in the context of other iterative optimization algorithms \citep{kumar2010self}. Most empirical studies, involving non-convex problems for the most part, demonstrated beneficial effects of curriculum learning, including faster convergence rate and  better final performance. Even so, this approach has not been widely adopted by practitioners (but see \citep{oh2015action,schroff2015facenet}). Moreover, this idea has not been theoretically analyzed, and no guarantees have ever been obtained for its success even on simple learning problems.

One inherent limitation of current curriculum learning approaches is the absence of formal and general definition of the difficulty score of an example, and a method for generating a curriculum based on such definition automatically. In their empirical research, \citet{bengio2009curriculum} relied on manually crafted, domain-specific curriculum. This approach is rather limited since in many cases the manual definition of easier sub-tasks or subsets of examples is impossible to acquire, especially with large scale and complex data. Moreover, even when it is possible to manually design a curriculum, the scoring of difficulty based on human intuition may not correlate well with the difficulty of the example or sub-problem for a learning algorithm. 

The framework of Self Paced learning (SPL) \citep{kumar2010self} overcomes this limitation by focusing on the intrinsic information of the learner, namely the loss with respect to the learner's current hypothesis, in order to avoid the need to obtain a curriculum from an extrinsic source. In this approach, a new optimization problem is introduced where the training loss is minimized jointly with a regularizing term, which attaches greater significance to points that better fit the current learner's hypothesis (namely, incur lower loss). While SPL obviates the need for a predefined curriculum, new difficulties are introduced: the new optimization problem is more difficult to solve, while by relying only on the learner's training loss it is more susceptible to problems like over-fitting and instability of training. Moreover, the SPL heuristics seems to contradict other commonly used heuristics, which attach greater significance to points that \emph{do not} fit well with the current learner's hypothesis (namely, incur \emph{higher} loss). Examples include hard data mining \citep{shrivastava2016training} and boosting \cite{boosting98}.

In this paper, we tackle those challenges from a theoretical point of view by first presenting the general definition of Ideal Difficulty Score (IDS) - the loss of the optimal hypothesis with respect to the example. We then analyze the relation between this score and the contribution of an example to the convergence of SGD in the context of two convex optimization problems - linear regression and classification with hinge loss minimization. Our analysis shows that under some reasonable assumptions,  convergence rate is expected to decrease monotonically with the difficulty of the sampled examples. This analysis is consistent with empirical results as discussed above.

Another challenge involves the success of apparently contradictory methods, which are based on the idea that the more difficult examples should be given higher weight \citep{shrivastava2016training,boosting98}. We hypothesize that this apparent contradiction can be explained in part by some confusion in the literature with respect to how difficulty is measured. More specifically, we formally differentiate between the \emph{global} difficulty score as defined by the IDS, and the \emph{local} difficulty score as defined by the loss with respect to the current hypothesis. In agreement with the intuition underlying both approaches, we claim that ideally a learner should follow a curriculum based on  extrinsic (global) difficulty, while not "wasting time" on examples that are easy for the current (local) learning hypothesis. In accordance, we formally show, by analyzing again the problems of linear regression and hinge loss minimization, that when examples are drawn conditioned on some fixed global difficulty score, convergence rate monotonically increases with the local difficulty of the example.

In Practice, there is no easy way to define a curriculum based on the concept of \emph{Ideal Difficulty Score}, since the optimal hypothesis is not known to the learner. Nevertheless, many practical scenarios that employ machine learning involve a sequence of iterations of model improvement. In such scenarios, results from earlier iterations can be used to generate a curriculum for subsequent iterations. Another scenario involves transfer learning from a strong learner to a weaker learner. Thus, it has been shown by \citet{weinshall2018curriculum} that curriculum based on the stronger model's difficulty scores can be used to train the weak model faster, and to lead it to a better solution.

\paragraph{Related Work.} 
\citet{jiang2017mentornet} tackled the problem of automatic generation of curriculum by suggesting a general framework for joint training of two deep neural networks, where one network  which is referred to as MentorNet is trained to generate adaptive curriculum for the other network. In their work, they show both empirically and theoretically that the data-driven generation of curriculum by MentorNet can improve the learner robustness to noisy data. 

The apparent contradiction in empirical reports, showing the advantage of both curriculum learning and hard example mining, motivated 
\citet{chang2017active} to suggest the active bias method. This method circumvents the problem of "easy vs. hard" by focusing on certainty instead of difficulty. In their approach, the training schedule is designed according to the model's prediction variance over the previous training steps, where distribution is biased in favor of examples with high prediction variance. 

Our approach differs from these two ideas  in that it tackles  the question of difficulty definition directly. In contrast, MentorNet and active bias can, in theory, learn to generate biases over the data distribution which do not necessarily reflect a difficulty based curriculum. Future work should examine whether any curriculum generated by these methods complies with the intuition derived from our theoretic results. Namely, a curriculum should rank the examples so that they are negatively correlated with some global difficulty score, and positively correlated with the local difficulty.

In the rhe rest of the paper, we first introduce some basic notations and definitions in Section~\ref{sec:defs}. In Sections~\ref{sec:regression},\ref{sec:hinge} we develop the theory and prove the main results for the convex problems of linear regression and hinge loss respectively.

\section{Notations and Definitions}
\label{sec:defs}

Let $\iX_i=\{(\bx_i,y_i)\}_{i=1}^n$  denote the training data, where $\bx\in\iR^d$ denotes the $i$-th data point and $y$ its corresponding label. Let $\mathcal{D}$ denote the data distribution from which a sequence of training examples $\bX_t = \{\bx_t,y_t\}_{t=1}^T$ is drawn. Let $\mathcal{H}$ denote a set of hypotheses $h_\bw$ defined by the parameters vector $\bw$. Let $L(\bX_t,h)$ denote the loss of hypothesis $h$ when given example $\bX_t$. Then, our standard SGD objective is to find $\bar h$ defined by $\bar\bw$, which minimizes the empirical loss
\begin{equation*}
    L_\mathcal{D}(h)=\mathbb{E}_{\bX_t\sim \mathcal{D}}(L(\bX_t, h))
\end{equation*}
This framework is usually referred to as Empirical Risk Minimization. We will assume henceforth that samples are drawn directly from $\mathcal{D}$, which will allow us to analyze the continuous relation between the examples' difficulty scores and the expected convergence rate.

We now formally define the \emph{Ideal Difficulty Score} or Global Difficulty as described earlier. Let $\bar h = \underset{h}{\arg\min} \;L_\mathcal{D}(h)$, then the \emph{Ideal Difficulty Score} of example $\bX$ is defined as
\begin{equation}
    \Psi(\bX)= g(L(\bX,\bar h))
    \label{eq:def-IDS}
\end{equation}
where $g()$ is a monotonic function. Similarly, we define the \emph{Local Difficulty score} of an example $\bX$ at iteration $t$ as
\begin{equation*}
    \Upsilon(\bX)= g(L(\bX,h_t))
\end{equation*}
where $h_t $ is the hypothesis at time (iteration) $t$. 

Given the sequence $\{\bX_t \}_{t=1}^T$, SGD computes a sequence of estimators $\{\bw_t\}_{t=1}^T$. Although in practice many variations of SGD are used which yield different optimization steps, we analyze here the basic form in which the update rule is: 
\begin{equation}
\label{eq:ranked-GD}
\bw_{t+1} = \bw_t - \eta \frac{\partial L(\bX_t,\bw)}{\partial \bw} \vert_{\bw=\bw_t}
\end{equation}
where $\eta$ is a hyper-parameter which controls the learning rate of the algorithm. Since our analysis focuses on the transient behavior of the learning algorithm and not on a sequence of update steps, we will not state explicitly the index $t$ when it is clear from context.

SGD is only guaranteed to converge to a local minimum of the loss function. We therefore limit our analysis to simple convex problems. We chose to focus on two popular models of two common problems - linear regression and classification by hinge loss minimization. Since SGD without curriculum converges to the global optimum in these convex problems, we focus on the analysis of the effect of introducing curriculum on the convergence rate of SGD.

\newpage
\section{Linear Regression}
\label{sec:regression}

In linear regression, the learner's goal is to predict a real value $y=h(\bx)$ for $\bx\in \mathbb{R}^d$, where $h \in \mathcal{H}$ is a linear function of $\bx$ and the loss is defined by the least squares function. Formally, using the notations above, the loss function can be written as follows:
\begin{equation}
    L(\bX, \bw) = (\ba\cdot\bx+b-y)^2 \doteq (\bx\cdot\bw -y)^2 
\label{eq:reg}
\end{equation}
where $\bw\doteq [\ba,b]^t\in\iR^{d+1}$ concatenates the linear separator and the bias term. With some abuse of notation, $\bx$ now denotes the vector $[\bx,1]^t\in\iR^{d+1}$. Let $\bs$ denotes the gradient step at time $t$. We obtain from (\ref{eq:ranked-GD}) and (\ref{eq:reg})
\begin{equation}
\label{eq:gradient_t}
    \bw_{t +1}=\bw_{t} - 2 \eta (\bx\cdot\bw -y)\bx =\bw_{t} - 2\bs
\end{equation}

\subsection{Convergence rate decreases with \emph{global difficulty}}

The main theorem in this sub-section states that the expected rate of convergence of SGD is monotonically \emph{decreasing} with the \emph{Difficulty Score} of the sample $\bX_t$. We prove it below for the gradient step as defined in (\ref{eq:ranked-GD}). If the size of the gradient step is fixed at $\eta$, a somewhat stronger theorem can be obtained where the constraint on the step size being small is not required.

Recall that $\bx,\bw\in\iR^{d+1}$. The convergence analysis in carried out in the parameter space $\bw\in\iR^{d+1}$, where parameter vector $\bw$ corresponds to a point, and data vector $\bx$ describes a hyperplane. In this space, let $\Omega_{\bx}$ denote the hyperplane on which the gradient step $\bs$ vanishes, i.e. $\bs=0$. It follows from (\ref{eq:gradient_t}) that this hyperplane is defined by $\bx\cdot\bw = y$, namely, $\bx$ defines its normal direction. This implies that the gradient step at time $t$ is perpendicular to $\Omega_{\bx}$ as illustrated in Fig.~\ref{fig:omega-plain}. Let $\bar\bz$ denote the projection of $\bar\bw$, the parameters of the optimal hypothesis, on $\Omega_{\bx}$. 

\begin{figure}[th!]
	\centering
	\includegraphics[width=0.7\textwidth]{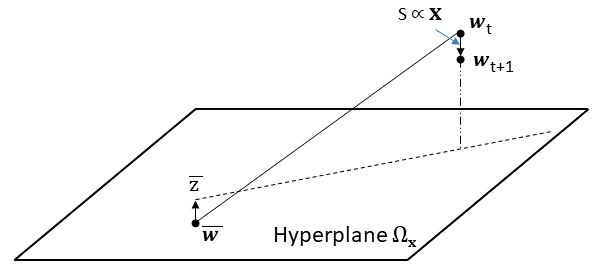}
    \caption{The geometry of the gradient step at time $t$. 
    \label{fig:omega-plain}}
\end{figure}

Because of the nature of the regression loss, which is based on the squared distance, we use $g(x)=\sqrt{x}$ in (\ref{eq:def-IDS}), giving us the following difficulty score $\Psi(\bX)=\sqrt{L(\bX,\bar\bw)}$. 
\begin{lemma}
\label{lemma:1}
Fix the training point $\bX$. The \emph{Difficulty Score} of $\bX$ is
$\Psi^2= r^2 \Vert \bar\bw-\bar\bz \Vert^2$.
\end{lemma}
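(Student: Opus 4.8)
The plan is to simply unwind the definition of the Difficulty Score and then recognize $\Vert\bar\bw-\bar\bz\Vert$ as the elementary point-to-hyperplane distance from $\bar\bw$ to $\Omega_{\bx}$. First I would observe that, with the choice $g(x)=\sqrt{x}$ and the regression loss~\eqref{eq:reg}, $\Psi(\bX)=\sqrt{L(\bX,\bar\bw)}=|\bx\cdot\bar\bw-y|$, so that $\Psi^2=(\bx\cdot\bar\bw-y)^2$. The whole content of the lemma is thus to rewrite this residual squared in terms of the geometric quantities $r$ and $\Vert\bar\bw-\bar\bz\Vert$ appearing in Figure~\ref{fig:omega-plain}.

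Next I would use the description of $\Omega_{\bx}$ given just before the statement: it is the affine hyperplane $\{\bw\in\iR^{d+1}:\bx\cdot\bw=y\}$ whose normal direction is $\bx$, and $\bar\bz$ is the orthogonal projection of $\bar\bw$ onto it. Hence $\bar\bw-\bar\bz$ is parallel to $\bx$; writing $\bar\bw-\bar\bz=\lambda\bx$ and imposing $\bx\cdot\bar\bz=y$ yields $\lambda=(\bx\cdot\bar\bw-y)/\Vert\bx\Vert^2$, so that $\Vert\bar\bw-\bar\bz\Vert=|\lambda|\,\Vert\bx\Vert=|\bx\cdot\bar\bw-y|/\Vert\bx\Vert$ — the standard distance from a point to a hyperplane. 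Finally, taking $r=\Vert\bx\Vert$ (the norm of the augmented data vector, as in the figure) and squaring gives $r^2\Vert\bar\bw-\bar\bz\Vert^2=(\bx\cdot\bar\bw-y)^2=\Psi^2$, which is the claim.

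I do not expect a genuine obstacle here: the argument is just the point-to-hyperplane distance formula combined with the definition of $\Psi$. The only two places requiring minor care are (i) making explicit that $r$ denotes $\Vert\bx\Vert$, consistent with the geometric picture set up in the paragraph preceding the lemma, and (ii) correctly invoking the orthogonality of the projection $\bar\bz$ so that $\bar\bw-\bar\bz$ lies along the normal $\bx$ rather than along an arbitrary direction in $\Omega_{\bx}$.
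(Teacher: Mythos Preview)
Your proposal is correct and is essentially the same argument as the paper's: both use that $\bar\bz\in\Omega_{\bx}$ gives $\bx\cdot\bar\bz=y$ and that $\bar\bw-\bar\bz$ is parallel to the normal $\bx$, then identify $r=\Vert\bx\Vert$. The only cosmetic difference is that the paper decomposes $\bar\bw=\bar\bz+(\bar\bw-\bar\bz)$ inside the loss and uses the parallelism to evaluate $[\bx\cdot(\bar\bw-\bar\bz)]^2$ directly, whereas you solve for the scalar $\lambda$ in $\bar\bw-\bar\bz=\lambda\bx$ and quote the point-to-hyperplane distance formula; the underlying computation is identical.
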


\begin{proof}
\begin{equation}
\begin{split}
\label{eq:diff-score}
\Psi(\bX)^2 &=L(\bX,\bar\bw) = L(\bX,\bar\bz + (\bar\bw-\bar\bz))  = [ \bx\cdot\bar\bz + \bx\cdot (\bar\bw-\bar\bz) - y]^2 \\
&= [ \bx\cdot (\bar\bw-\bar\bz)]^2 = \Vert \bx \Vert^2 \Vert \bar\bw-\bar\bz \Vert^2 
\end{split}
\end{equation}
The first transition in the last line follows from $\bar\bz\in\Omega_{\bx} \implies \bx\cdot\bar\bz- y=0$. The second transition follows from the fact that both $\bx$ and $(\bar\bw-\bar\bz)$ are perpendicular to $\Omega_{\bx}$, and therefore parallel to each other.
\widowpenalty=10000
\end{proof}

Next, we embed the data points in the parameters space, representing each datapoint $\bx$ using a hyperspherical coordinate system $[r,\vartheta,\Phi]$, with pole (origin) fixed at $\bar\bw$ and polar axis (zenith direction) $\iO = \bar\bw-\bw_t$ (see Fig.~\ref{fig:obtuse}). $r$ denotes the vector's length, while $0\le\vartheta\le \pi$ denotes the polar angle with respect to $\iO$. Let $\Phi=[\varphi_1\ldots,\varphi_{d-1}]$ denote the remaining polar angles.

To illustrate, Fig.~\ref{fig:obtuse} shows a planar section of the parameter space, the $2D$ plane formed by the two intersecting lines $\iO$ and $\bar\bz-\bar\bw$. The gradient step $\bs$ points from $\bw_t$ towards $\Omega_{\bx}$. $\Omega_{\bx}$ is perpendicular to $\bx$, which is parallel to $\bar\bz-\bar\bw$ and to $\bs$, and therefore $\Omega_{\bx}$ is projected onto a line in this plane. We introduce the notation $\lambda=\Vert\bar\bw-\bw_t\Vert$. 

\begin{figure}[ht!]
	\centering
	\includegraphics[width=0.75\textwidth]{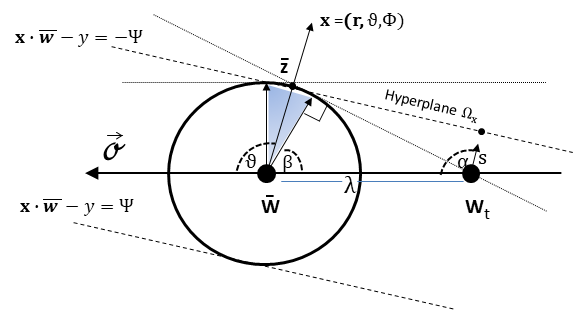}
    \caption{The $2D$ planar section defined by the vectors $\iO=\bar\bw-\bw_t$ and $\bar\bz-\bar\bw$. The circle centered on $\bar\bw$ has radius $\Vert\bar\bw-\bar\bz\Vert=\frac{{\Psi}}{\Vert\bx\Vert}$ from Lemma~\ref{lemma:1}. It traces the location of $\bar\bz$ for all the points $\bx$ with the same length $r$ and the same difficulty score $\Psi$. 
    \label{fig:obtuse}}
\end{figure}

Let $\bs_{\OO}$ denote the projection of the gradient vector $\bs$ on the polar axis $\iO$, and let $\bs_{\perp}$ denote the perpendicular component. From (\ref{eq:gradient_t}) and the definition of $\Psi$
\begin{equation}
\label{eq:gradient_full}
\begin{split}
&\bs = -2\eta\bx (\bx\cdot \bw_t - y) = -2\eta\bx[\bx\cdot (\bw_t-\bar\bw)\pm{\Psi}] \\
&\bs_{\OO}=\bs\cdot\frac{\bar\bw-\bw_t}{\lambda}=2\frac{\eta}{\lambda}[r^2\lambda^2\cos^2\vartheta\mp{\Psi}r\lambda\cos\vartheta]
\end{split}
\end{equation}

Let $\bx=(r,\vartheta,\Phi)$. 
%
The following analysis requires the conditional distribution of the data $\iX$ given difficulty score $\Psi$. We note that fixing the difficulty score determines the label to be one of the following two possible values for $y(\bx\vert_{\displaystyle{\Psi}})$: $y_1(\bx)=\bx\cdot\bar\bw+{\Psi}$, and $y_2(\bx)=\bx\cdot\bar\bw-{\Psi}$. We assume that both labels are equally likely, and therefore 
\begin{equation}
\label{eq:symm-assumption}
f_{{\cal D}(\iX)}(\bx,y_i(\bx)\vert_{\displaystyle{\Psi}}) = \frac{1}{2}f(\bx)=\frac{1}{2}f(r,\vartheta,\Phi)
\end{equation} 
This assumption implies a symmetrical data distribution ${{\cal D}(\iX)}$, where $f_{{\cal D}(\iX)}(\bx,\bx\cdot\bar\bw+u)=f_{{\cal D}(\iX)}(\bx,\bx\cdot\bar\bw-u)~\forall u$\footnote{In Appendix~\ref{app:bayesian} we show that in a Bayesian framework, this assumption can be replaced by $\iE_{f_v}[f_{{\cal D}(\iX)}(\bx,\bx\cdot\bar\bw+u)]=\iE_{f_v}[f_{{\cal D}(\iX)}(\bx,\bx\cdot\bar\bw-u)]~\forall u$, where expectation is taken with respect to some prior distribution ${f_v}$ over ${{\cal D}(\iX)}$.}.

Let $\Delta(\Psi)$ denote the expected convergence rate at time $t$, given fixed difficulty score $\Psi$. 
\begin{equation}
\label{eq:delta}
\Delta(\Psi) = \iE[\Vert \bw_{t} - \bar\bw \Vert^2 - \Vert \bw_{t+1} - \bar\bw \Vert^2 \vert_{\displaystyle{\Psi}}]
\end{equation}

\begin{lemma}
\label{lemma:2}
\begin{equation}
\label{eq:Delta}
\Delta(\Psi) = 2\lambda\iE[\bs_{\OO}\vert_{\displaystyle{\Psi}}]-\iE[\bs^2\vert_{\displaystyle{\Psi}}]
\end{equation}
\end{lemma}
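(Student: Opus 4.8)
The plan is to derive Lemma~\ref{lemma:2} from the standard one-step ``progress'' identity of SGD, by resolving the gradient step along the polar axis $\iO$ and then taking the expectation conditioned on $\Psi$.

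First I would substitute the update rule into the definition~(\ref{eq:delta}) of $\Delta(\Psi)$. Using the gradient step $\bs$ as in~(\ref{eq:gradient_full}), for which $\bw_{t+1}=\bw_t+\bs$, and expanding the squared norm around $\bw_t-\bar\bw$,
\begin{equation*}
\Vert\bw_{t+1}-\bar\bw\Vert^2=\Vert\bw_t-\bar\bw\Vert^2+2\,\bs\cdot(\bw_t-\bar\bw)+\Vert\bs\Vert^2 ,
\end{equation*}
so that, after cancelling the first term in the difference,
\begin{equation*}
\Vert\bw_t-\bar\bw\Vert^2-\Vert\bw_{t+1}-\bar\bw\Vert^2=2\,\bs\cdot(\bar\bw-\bw_t)-\Vert\bs\Vert^2 .
\end{equation*}

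Next I would resolve the cross term in the coordinate frame already set up, whose polar axis is $\iO=\bar\bw-\bw_t$ with $\lambda=\Vert\bar\bw-\bw_t\Vert$. Decomposing $\bs=\bs_{\OO}\frac{\bar\bw-\bw_t}{\lambda}+\bs_{\perp}$ with $\bs_{\perp}\perp(\bar\bw-\bw_t)$, the perpendicular part drops out of the inner product and $\bs\cdot(\bar\bw-\bw_t)=\lambda\,\bs_{\OO}$, yielding the pointwise identity
\begin{equation*}
\Vert\bw_t-\bar\bw\Vert^2-\Vert\bw_{t+1}-\bar\bw\Vert^2=2\lambda\,\bs_{\OO}-\bs^2 ,
\end{equation*}
where $\bs^2$ abbreviates $\Vert\bs\Vert^2$. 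Finally I would take the expectation conditioned on the difficulty score $\Psi$: since we condition on the current iterate, $\bw_t$ and hence $\lambda$ are deterministic and factor out of $\iE[\,\cdot\,\vert_{\displaystyle{\Psi}}]$, so~(\ref{eq:Delta}) follows.

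I do not expect a genuine analytic obstacle here: Lemma~\ref{lemma:2} is essentially the law-of-cosines (three-point) identity for a single SGD step, rewritten in the polar frame. The points that require care are the bookkeeping of the constants and signs in the update rule --- in particular that $\bs$ is the actual parameter displacement $\bw_{t+1}-\bw_t$ of~(\ref{eq:gradient_full}), so that $\bs^2=\Vert\bs\Vert^2$ and the cross term carries a factor $2\lambda$ --- and the observation that conditioning on $\Psi$ leaves $\bw_t$ and $\lambda$ fixed, which is what allows $\lambda$ to be pulled outside the expectation. The substantive work is deferred to the subsequent lemmas, where $\iE[\bs_{\OO}\vert_{\displaystyle{\Psi}}]$ and $\iE[\bs^2\vert_{\displaystyle{\Psi}}]$ must be evaluated using the symmetry assumption~(\ref{eq:symm-assumption}).
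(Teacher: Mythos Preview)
Your proposal is correct and essentially identical to the paper's own proof: the paper writes $\Vert\bw_t-\bar\bw\Vert^2=\lambda^2$ and $\Vert\bw_{t+1}-\bar\bw\Vert^2=(-\lambda+\bs_{\OO})^2+\bs_{\perp}^2$ using the orthogonal decomposition of $\bs$ along $\iO$, then expands and takes expectation, which is exactly your three-point identity with the cross term $\bs\cdot(\bar\bw-\bw_t)=\lambda\,\bs_{\OO}$ already resolved. Your care about the sign convention is warranted, since~(\ref{eq:gradient_t}) and~(\ref{eq:gradient_full}) are not mutually consistent in the paper, but your reading $\bw_{t+1}=\bw_t+\bs$ from~(\ref{eq:gradient_full}) is the one that makes both proofs go through.
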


\begin{proof}
From (\ref{eq:delta})
\begin{align*}
\iE[\Delta] &= (-\lambda)^2-\iE[(-\lambda+\bs_{\OO})^2+\bs_{\perp}^2] = \lambda^2-(\lambda^2 -2\lambda\iE[\bs_{\OO}]+\iE[\bs_{\OO}^2]) - \iE[\bs_{\perp}^2]  \\
&= 2\lambda\iE[\bs_{\OO}]-\iE[\bs^2]
\end{align*}
\widowpenalty=10000
\end{proof}

Using this Lemma, it follows from (\ref{eq:gradient_full}) and (\ref{eq:Delta}) that\footnote{The short-hand notation $\iE[(\pm{\Psi})]$ implies that the operand of $\iE[]$ should appear with both signs, each with the matching probability of one of the 2 conditional labels $\{y_i(\bx)\}_{i=1}^2$.}
\begin{align}
\label{eq:e-delta}
 \frac{1}{4}\Delta(\Psi) &= \eta\iE[r^2\lambda^2\cos^2\vartheta]-\eta^2\iE[r^4\lambda^2\cos^2\vartheta] \nonumber -\eta^2\Psi^2\iE[r^2] \\
-& \eta\iE[(\pm{\Psi})r\lambda\cos\vartheta] - 2\eta^2\iE[(\pm{\Psi})r^3\lambda\cos\vartheta] 
\end{align}
Using the symmetry assumption (\ref{eq:symm-assumption}), we can show that
\begin{equation*}
\iE[(\pm{\Psi})r\lambda\cos\vartheta] =\iE[(\pm{\Psi})r^3\lambda\cos\vartheta] = 0
\end{equation*}
from which it follows that
\begin{equation}
\label{cor:3}
\frac{1}{4}\Delta(\Psi) = \eta\iE[r^2\lambda^2\cos^2\vartheta]-\eta^2\iE[r^4\lambda^2\cos^2\vartheta] -\eta^2\Psi^2\iE[r^2] 
\end{equation}

We can now state the main theorem of this section.
\begin{theorem}
\label{theorem:1}
At time $t$ the expected convergence rate for training point $\bx$ is monotonically decreasing with the Difficulty Score $\Psi(\bX)$. If the step size coefficient is sufficiently small so that $\eta\le\frac{\iE[r^2\cos^2\vartheta]}{\iE[r^4\cos^2\vartheta]}$, it is likewise monotonically increasing with the distance $\lambda$ between the current estimate of the hypothesis $\bw_t$ and the optimal hypothesis $\bar\bw$.
\end{theorem}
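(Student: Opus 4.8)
The plan is to read the theorem directly off the closed-form expression \mbox{(\ref{cor:3})}, viewing it as a function of the two parameters $\Psi$ and $\lambda$. First I would note that $\lambda=\Vert\bar\bw-\bw_t\Vert$ is deterministic at time $t$ and therefore factors out of every expectation in \mbox{(\ref{cor:3})}. Second, and crucially, the remaining conditional expectations involve only $r$ and $\vartheta$, which are functions of $\bx$ alone (the length of $\bx$ and its polar angle in the fixed frame with pole $\bar\bw$ and axis $\iO$); by the symmetry assumption \mbox{(\ref{eq:symm-assumption})} the conditional law of $\bx$ given $\Psi$ is just the marginal $f(r,\vartheta,\Phi)$, since conditioning on $\Psi$ merely selects one of the two equally likely labels $y_1(\bx),y_2(\bx)$ and leaves the distribution of $\bx$ untouched. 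Hence, writing $A=\iE[r^{2}\cos^{2}\vartheta]$, $B=\iE[r^{4}\cos^{2}\vartheta]$ and $C=\iE[r^{2}]>0$ --- all independent of both $\Psi$ and $\lambda$ --- equation \mbox{(\ref{cor:3})} becomes
\begin{equation*}
\frac{1}{4}\Delta(\Psi)=\lambda^{2}\bigl(\eta A-\eta^{2}B\bigr)-\eta^{2}\Psi^{2}C .
\end{equation*}

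For the first claim I would differentiate this with respect to $\Psi$: only the last term carries $\Psi$, so $\partial_{\Psi}\bigl(\tfrac14\Delta\bigr)=-2\eta^{2}C\,\Psi\le 0$, strictly negative for $\Psi>0$. Thus $\Delta$ is monotonically decreasing in $\Psi$ with no constraint on $\eta$ whatsoever --- the point being that the sign of the coefficient $\eta A-\eta^{2}B$ is irrelevant here. For the second claim I would instead differentiate with respect to $\lambda\ge 0$: $\partial_{\lambda}\bigl(\tfrac14\Delta\bigr)=2\lambda\bigl(\eta A-\eta^{2}B\bigr)=2\eta\lambda\,(A-\eta B)$, which is non-negative exactly when $A-\eta B\ge 0$, i.e. when $\eta\le A/B=\iE[r^{2}\cos^{2}\vartheta]/\iE[r^{4}\cos^{2}\vartheta]$ --- precisely the hypothesis of the theorem. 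Under that hypothesis the coefficient of $\lambda^{2}$ is nonnegative, so $\Delta$ is monotonically increasing in $\lambda$.

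I expect the only genuinely delicate step to be the reduction in the first paragraph: establishing rigorously that $A,B,C$ do not depend on $\Psi$. This is where the symmetry assumption \mbox{(\ref{eq:symm-assumption})} does all the work, so I would spell out that it implies $f_{{\cal D}(\iX)}(\bx\mid\Psi)=f(\bx)$ after summing over the two labels, and that $r,\vartheta,\Phi$ --- being determined by $\bx$ alone --- therefore have a $\Psi$-independent conditional law. Everything after that is one-variable calculus on a quadratic; the degenerate cases $\lambda=0$ (already at the optimum, $\Delta\equiv 0$) and $\iE[r^{4}\cos^{2}\vartheta]=0$ (so that $r\cos\vartheta=0$ almost surely) can be disposed of in a line.
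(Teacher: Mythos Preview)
Your proposal is correct and follows essentially the same route as the paper: both differentiate the closed form \mbox{(\ref{cor:3})} directly, obtaining $\partial_{\Psi}\Delta=-8\eta^{2}\iE[r^{2}]\Psi\le 0$ and $\partial_{\lambda}\Delta=8\eta\lambda\bigl(\iE[r^{2}\cos^{2}\vartheta]-\eta\,\iE[r^{4}\cos^{2}\vartheta]\bigr)$, and read off the two monotonicity claims. Your added paragraph justifying that $A,B,C$ are $\Psi$-independent via \mbox{(\ref{eq:symm-assumption})} is a welcome clarification the paper leaves implicit, but it does not change the argument.
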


\begin{proof}
From (\ref{cor:3})
\begin{equation*}
\frac{\partial\Delta(\Psi)}{\partial\Psi} = -8\eta^2\iE[r^2]\Psi \le 0
\end{equation*}
which proves the first statement. In addition,
\begin{equation*}
\frac{\partial\Delta(\Psi)}{\partial\lambda} = 8\eta\lambda \left ( \iE[r^2\cos^2\vartheta]-\eta\iE[r^4\cos^2\vartheta] \right )
\end{equation*}
If $\eta\le\frac{\iE[r^2\cos^2\vartheta]}{\iE[r^4\cos^2\vartheta]}$ then $\frac{\partial\Delta(\Psi)}{\partial\lambda}\ge 0$, and the second statement follows.
\widowpenalty=10000
\end{proof}

\begin{corollary}
\label{cor:1}
Although $\iE[\Delta(\Psi)]$ may be negative, $\bw_t$ always converges faster to $\bar\bw$ when the training points are sampled from easier examples with smaller $\Psi$. 
\end{corollary}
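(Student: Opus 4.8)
The plan is to read Theorem~\ref{theorem:1} as a statement about the expected squared distance \emph{after} one SGD step, not merely about the one-step decrement. When we condition on $\Psi$ the current iterate $\bw_t$ is held fixed, so $\lambda^2=\Vert\bw_t-\bar\bw\Vert^2$ is deterministic and the definition (\ref{eq:delta}) rearranges to $\iE[\Vert\bw_{t+1}-\bar\bw\Vert^2\mid\Psi]=\lambda^2-\Delta(\Psi)$. Theorem~\ref{theorem:1} gives $\partial\Delta/\partial\Psi\le 0$, so $\lambda^2-\Delta(\Psi)$ is monotonically increasing in $\Psi$. Hence for any two difficulty levels $\Psi_1<\Psi_2$ and the same starting hypothesis $\bw_t$, the estimate produced from the easier sample satisfies $\iE[\Vert\bw_{t+1}-\bar\bw\Vert^2\mid\Psi_1]\le\iE[\Vert\bw_{t+1}-\bar\bw\Vert^2\mid\Psi_2]$, with strict inequality whenever $\iE[r^2]>0$, i.e.\ outside degenerate cases. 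This is the ``converges faster'' claim.

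Next I would address the role of the preamble ``although $\iE[\Delta(\Psi)]$ may be negative'', which pre-empts the objection that in the regime $\Delta(\Psi)<0$ a step pushes $\bw_t$ away from $\bar\bw$ in expectation, so there is seemingly no convergence rate to compare. The point is that the curriculum comparison is sign-agnostic: from (\ref{cor:3}), $\Delta(\Psi_1)-\Delta(\Psi_2)=4\eta^2\iE[r^2](\Psi_2^2-\Psi_1^2)\ge 0$, and this holds irrespective of the signs of $\Delta(\Psi_1)$ and $\Delta(\Psi_2)$. So even when the distance to $\bar\bw$ grows in expectation, it grows strictly less --- equivalently, shrinks strictly more --- for the lower-difficulty sample; the relative advantage of the easy-to-hard ordering survives the absolute convergence regime, which is exactly the content of the corollary.

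Finally, for the trajectory-level reading suggested by ``always'' (namely $\iE[\Vert\bw_t^{(1)}-\bar\bw\Vert^2]\le\iE[\Vert\bw_t^{(2)}-\bar\bw\Vert^2]$ for runs fed difficulty $\Psi_1<\Psi_2$ at every step), one would attempt an induction through the affine per-step map $\lambda^2\mapsto\lambda^2-\Delta(\Psi)$. Its $\lambda^2$-coefficient $1-4\eta\iE[r^2\cos^2\vartheta]+4\eta^2\iE[r^4\cos^2\vartheta]$ is nonnegative, because by Cauchy--Schwarz $\iE[r^2\cos^2\vartheta]^2\le\iE[r^4\cos^2\vartheta]\,\iE[\cos^2\vartheta]\le\iE[r^4\cos^2\vartheta]$, and its intercept $4\eta^2\Psi^2\iE[r^2\mid\Psi]$ is $\Psi$-monotone and independent of $\bw_t$. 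The main obstacle is that this $\lambda^2$-coefficient \emph{does} depend on $\bw_t$ through the polar axis $\iO=\bar\bw-\bw_t$, so along two distinct trajectories the per-step contraction factors differ and are correlated with the random $\lambda^2$; pushing the comparison through the expectation over iterates would require a coupling argument or an extra uniform bound on that coefficient. For the corollary as stated, however, the per-step, per-$\bw_t$ comparison of the first two paragraphs already suffices, so I would present that as the proof and note the trajectory version only as a consequence available under such additional control.
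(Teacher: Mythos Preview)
Your proposal is correct and matches the paper's (implicit) approach: the paper states Corollary~\ref{cor:1} without proof, treating it as an immediate consequence of Theorem~\ref{theorem:1} and the explicit formula~(\ref{cor:3}), which is precisely what your first two paragraphs unpack. Your third paragraph on the trajectory-level induction goes beyond anything the paper attempts and is a reasonable (and honestly caveated) addition, but is not needed for the corollary as stated.
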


\begin{corollary}
\label{cor:2}
If the step size coefficient $\eta$ is small enough so that $\eta\le\frac{\iE[r^2\cos^2\vartheta]}{\iE[r^4\cos^2\vartheta]}$, we should expect faster convergence at the beginning of curriculum-based SGD. 
\end{corollary}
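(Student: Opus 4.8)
The plan is to read this off directly from Theorem~\ref{theorem:1} together with Corollary~\ref{cor:1}, combining the two monotonicity statements with the two qualitative features that characterize the \emph{beginning} of a curriculum-based run: the current hypothesis is still far from the optimum, and the sampled examples are the easy ones.

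First I would invoke the second part of Theorem~\ref{theorem:1}. Under the stated hypothesis $\eta\le\frac{\iE[r^2\cos^2\vartheta]}{\iE[r^4\cos^2\vartheta]}$ we have $\frac{\partial\Delta(\Psi)}{\partial\lambda}\ge 0$, so for any fixed difficulty score the expected one-step progress $\Delta(\Psi)$ is monotonically increasing in $\lambda=\Vert\bar\bw-\bw_t\Vert$. Since a learning run typically starts from an initialization $\bw_0$ far from $\bar\bw$, the value of $\lambda$ is largest in the early iterations and, in expectation (as long as $\Delta>0$), shrinks as training proceeds; hence this factor pushes $\Delta$ to be largest at the start.

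Second I would invoke the first part of Theorem~\ref{theorem:1} / Corollary~\ref{cor:1}: $\frac{\partial\Delta(\Psi)}{\partial\Psi}=-8\eta^2\iE[r^2]\Psi\le 0$, so $\Delta$ is monotonically decreasing in the global difficulty $\Psi$. By the defining property of a curriculum — examples are presented from easy to hard — the sampled $\Psi$ is smallest at the beginning and grows over the run, so this factor too makes $\Delta$ largest at the start. Putting the two effects together, at the beginning of curriculum-based SGD $\lambda$ is large and $\Psi$ is small, each of which independently increases the expected per-step convergence rate $\Delta(\Psi)$; therefore we should expect convergence to be fastest in the early phase.

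The main subtlety — and the reason this is stated as a corollary rather than a sharp theorem — is that it concerns the typical \emph{trajectory} rather than a pointwise inequality: it rests on the informal but standard premises that $\bw_0$ is far from $\bar\bw$, that $\lambda$ decreases along the run, and that the curriculum actually orders examples from low to high $\Psi$. The only place the step-size assumption is genuinely needed is to guarantee $\frac{\partial\Delta(\Psi)}{\partial\lambda}\ge 0$; without it the ``far from the optimum'' effect could in principle reverse, leaving only the difficulty-ordering effect. I do not expect any hard computation here — the content is entirely in composing the two monotonicities of Theorem~\ref{theorem:1} with the qualitative picture of how $\lambda$ and $\Psi$ evolve during curriculum-based SGD.
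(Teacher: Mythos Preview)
Your proposal is correct and matches the paper's approach: the paper states this corollary without proof, treating it as an immediate consequence of Theorem~\ref{theorem:1}, and you have spelled out precisely the intended reasoning --- combining the monotonicity in $\lambda$ (which requires the step-size condition) with the monotonicity in $\Psi$ (Corollary~\ref{cor:1}), together with the qualitative observation that early curriculum iterations have large $\lambda$ and small $\Psi$. Your caveat about the qualitative, trajectory-level nature of the claim is apt and is exactly why the paper phrases this as an informal corollary rather than a theorem.
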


We note, outside the scope of the present discussion, that the predictions of these two corollaries have been observed in simulations with deep CNN network, where the loss function is far from being convex, see \citet{weinshall2018curriculum}.

\subsection{Convergence rate increases with \emph{local difficulty}}
\label{sec:regression-local}

The main theorem in this sub-section states that for a fixed global difficulty score $\Psi$, when the gradient step is small enough, convergence is monotonically \emph{increasing} with the \emph{local difficulty}, or the loss of the point with respect to the current hypothesis. \emph{This is not true in general.} The second theorem in this section shows that when the difficulty score is not fixed, there exist hypotheses $\bw\in\cH$ for which the convergence rate is decreasing with the local difficulty.

Let $\Upsilon^2=L(\bX,\bw_t)$ denote the loss of $\bX$ with respect to the current hypothesis $\bw_t$. Define the angle $\beta\in [0,\frac{\pi}{2})$ as follows (see Fig.~\ref{fig:obtuse})
\begin{equation}
\label{eq:beta}
\beta=\beta(r,\Psi,\lambda) = \arccos (\min(\frac{ {\Psi}}{\lambda r},1))
\end{equation}

\begin{lemma}
\label{lemma:4}
The relation between $\Upsilon, \Psi, r, \vartheta$ can be written separately in 4 regions as follows (see Fig.~\ref{fig:obtuse}):
\begin{eqnarray*}
A1& ~~0\le\vartheta\le\pi-\beta,~y=\bx\cdot\bar\bw+{\Psi}&\implies y=\bx\cdot\bw_t+\Upsilon, \lambda r \cos\vartheta=\bx\cdot(\bar\bw-\bw_t)=-\Psi+\Upsilon \\
A2& ~~\pi-\beta\le\vartheta\le\pi,~y=\bx\cdot\bar\bw+{\Psi}&\implies y=\bx\cdot\bw_t-{\Upsilon}, \lambda r \cos\vartheta=-{\Psi}-{\Upsilon} \\
A3& ~~ 0\le\vartheta\le\beta,~y=\bx\cdot\bar\bw-{\Psi}&\implies y=\bx\cdot\bw_t+{\Upsilon}, \lambda r \cos\vartheta={\Psi}+{\Upsilon}\\
A4& ~~ \beta\le\vartheta\le\pi,~y=\bx\cdot\bar\bw-{\Psi}&\implies y=\bx\cdot\bw_t-{\Upsilon},\lambda r \cos\vartheta={\Psi}-{\Upsilon}
\end{eqnarray*}
\end{lemma}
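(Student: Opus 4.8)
The claim is a statement about the fixed geometry of one data point $\bx$ in the parameter space, so I would argue it entirely by elementary trigonometry, with no probabilistic ingredient. The single identity that drives everything is that, since $\vartheta$ is by construction the angle between $\bx$ and the polar axis $\iO=\bar\bw-\bw_t$ while $r=\Vert\bx\Vert$ and $\lambda=\Vert\bar\bw-\bw_t\Vert$, we have $\bx\cdot(\bar\bw-\bw_t)=r\lambda\cos\vartheta$, hence $\bx\cdot\bw_t=\bx\cdot\bar\bw-r\lambda\cos\vartheta$. From the regression loss (\ref{eq:reg}) and the choice $g(x)=\sqrt{x}$, $\Upsilon=\sqrt{L(\bX,\bw_t)}=\vert\,\bx\cdot\bw_t-y\,\vert$, exactly as $\Psi=\vert\,\bx\cdot\bar\bw-y\,\vert$.

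First I would treat the label $y=\bx\cdot\bar\bw+\Psi$. Substituting the identity above gives $y-\bx\cdot\bw_t=\Psi+r\lambda\cos\vartheta$, so $\Upsilon=\vert\Psi+r\lambda\cos\vartheta\vert$; whether $y=\bx\cdot\bw_t+\Upsilon$ or $y=\bx\cdot\bw_t-\Upsilon$ is then decided by the sign of $\Psi+r\lambda\cos\vartheta$, which is non-negative precisely when $\cos\vartheta\ge-\tfrac{\Psi}{r\lambda}$. When $\Psi\le r\lambda$ this threshold angle is $\pi-\beta$ by the definition (\ref{eq:beta}) of $\beta$, since $\cos(\pi-\beta)=-\cos\beta=-\tfrac{\Psi}{r\lambda}$; thus $0\le\vartheta\le\pi-\beta$ gives region A1 with $r\lambda\cos\vartheta=-\Psi+\Upsilon$, and $\pi-\beta\le\vartheta\le\pi$ gives region A2 with $r\lambda\cos\vartheta=-\Psi-\Upsilon$. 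The label $y=\bx\cdot\bar\bw-\Psi$ is handled in the same way: now $y-\bx\cdot\bw_t=-\Psi+r\lambda\cos\vartheta$, the sign crossover is at $\cos\vartheta=\tfrac{\Psi}{r\lambda}$, i.e.\ at $\vartheta=\beta$, yielding A3 ($\vartheta\le\beta$, $r\lambda\cos\vartheta=\Psi+\Upsilon$) and A4 ($\vartheta\ge\beta$, $r\lambda\cos\vartheta=\Psi-\Upsilon$).

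The only points that need a word of care are the corner cases hidden in the clamp $\min(\tfrac{\Psi}{r\lambda},1)$ in (\ref{eq:beta}). When $\Psi>r\lambda$ we have $\beta=0$: in the first label case $\Psi+r\lambda\cos\vartheta>0$ for every $\vartheta\in[0,\pi]$, so A2 degenerates to the single point $\vartheta=\pi$ and only A1 is active; symmetrically only A4 is active in the second label case; the stated linear relations continue to hold on these collapsed ranges. I would also note that on the shared boundaries $\vartheta=\pi-\beta$ and $\vartheta=\beta$ both adjacent formulas return $\Upsilon=0$, so the two-sided assignment of the sign of $\Upsilon$ is consistent there. I do not expect any genuine obstacle here: the whole content is the identity $\bx\cdot(\bar\bw-\bw_t)=r\lambda\cos\vartheta$ together with careful bookkeeping of the two sign choices (the label branch $\pm\Psi$ and the residual branch $\pm\Upsilon$), the crux being to identify the residual-sign crossover angle with $\beta$ from (\ref{eq:beta}) — which is exactly where the geometry of Fig.~\ref{fig:obtuse} (the circle of radius $\Psi/r$ about $\bar\bw$ and its position relative to $\bw_t$) makes the clamped $\arccos$ the natural choice.
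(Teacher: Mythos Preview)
Your argument is correct, but the route differs from the paper's. The paper argues geometrically via the projections $\bar\bz$ and $\bz_t$ of $\bar\bw$ and $\bw_t$ onto the hyperplane $\Omega_{\bx}=\{\bw:\bx\cdot\bw=y\}$: it first ties the label branch $y=\bx\cdot\bar\bw\pm\Psi$ to which half-space $\bar\bz$ falls in, then ties the residual branch $y=\bx\cdot\bw_t\pm\Upsilon$ to the sign of $\bx\cdot(\bz_t-\bw_t)$, and finally reads off the angular thresholds $\beta$, $\pi-\beta$ from Fig.~\ref{fig:obtuse}. You instead bypass the projections entirely and work directly from the single inner-product identity $\bx\cdot(\bar\bw-\bw_t)=r\lambda\cos\vartheta$, computing $y-\bx\cdot\bw_t$ for each label and locating the sign crossover of $\Psi\pm r\lambda\cos\vartheta$ at $\cos\vartheta=\mp\Psi/(r\lambda)$, which by (\ref{eq:beta}) is $\vartheta=\pi-\beta$ or $\vartheta=\beta$. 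Your approach is more self-contained --- it does not defer the verification to the figure --- and it explicitly treats the clamp case $\Psi>r\lambda$ (where $\beta=0$ and two of the four regions collapse), which the paper's proof leaves implicit. The paper's geometric framing, on the other hand, keeps the argument visually tied to the picture that motivates the later change of variables $u=r\cos\vartheta$, $v=r\sin\vartheta$.
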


\begin{proof}
We keep in mind that $\forall \bx$ and $\Psi$, there are 2 possible labels $y$ whose probability is equal from assumption (\ref{eq:symm-assumption}). Recall that $\bar\bz$ denotes the projection of $\bar\bw$ on $\Omega_{\bx}$. In the planar section shown in Fig.~\ref{fig:obtuse},

\begin{description}
\item
$\bar\bz$ lies in the upper half space $\iff$ $y=\bx\cdot\bar\bw+{\Psi}$
\item
$\bar\bz$ lies in the lower half space  $\iff$ $y=\bx\cdot\bar\bw-{\Psi}$
\end{description}
This follows from 3 observations: $\bar\bx$ lies in the upper half space by the definition of the polar coordinate system, $\bx\cdot\bar\bw-y=\pm\Psi$, and
\begin{equation*}
0 = \bx\cdot\bar\bz - y= \bx\cdot (\bar\bz-\bar\bw)  +\bx\cdot\bar\bw-y
\end{equation*}

Next, let $\bz_t$ denote the projection of $\bw_t$ on $\Omega_{\bx}$. Then
\begin{equation*}
0 = \bx\cdot\bz_t - y= \bx\cdot (\bz_t-\bw_t)  +\bx\cdot\bw_t-y
\end{equation*}
When $\bar\bz$ lies in the upper half space, the following can be verified geometrically from Fig.~\ref{fig:obtuse}:
\begin{eqnarray*}
0\le\vartheta\le\pi-\beta &\implies~ \bx\cdot (\bz_t-\bw_t)\ge 0 &~implies~ y=\bx\cdot\bw_t+{\Upsilon} \\
\pi-\beta\le\vartheta\le\pi&\implies~\bx\cdot (\bz_t-\bw_t)\le 0 &\implies~ y=\bx\cdot\bw_t-{\Upsilon}
\end{eqnarray*}

\widowpenalty=10000
\end{proof}


Next we analyze how the convergence rate at $\bx$ changes with $\Upsilon$. Let $\Delta(\Psi,\Upsilon)$ denote the expected convergence rate at time $t$, given fixed global difficulty $\Psi$ and local difficulty  $\Upsilon$. From (\ref{cor:3}), $\Delta(\Psi,\Upsilon)=4\eta\iE[r^2\lambda^2\cos^2\vartheta\vert_{\displaystyle{\Upsilon}}]+O(\eta^2)$.

It is easier to analyze $\Delta(\Psi,\Upsilon)$ in a Cartesian coordinates system, rather than polar. We focus again on the $2D$ plane defined by the vectors $\iO=\bar\bw-\bw_t$ and $\bar\bz-\bar\bw$ (see Fig.~\ref{fig:obtuse}); here we define $u = r \cos\vartheta, ~v= r \sin\vartheta$. The 4 cases listed in Lemma~\ref{lemma:4} can be readily transformed to this coordinate system as follows $\{0\le\vartheta\le\beta\}\Leftrightarrow \{\lambda u\ge\Psi\}$, $\{\beta\le\vartheta\le\pi-\beta\}\Leftrightarrow \{-\Psi\le \lambda u\le\Psi\}$, and $\{\pi-\beta\le\vartheta\le\pi\}\Leftrightarrow \{\lambda u\le -\Psi\}$:

\begin{eqnarray*}
A1~~~&  \lambda u\ge -\Psi&\implies~~\lambda u=-{\Psi}+{\Upsilon}\\
A2~~~&  \lambda u\le -\Psi&\implies~~\lambda u=-{\Psi}-{\Upsilon}\\
A3~~~&  \lambda u\ge\Psi&\implies~~\lambda u={\Psi}+{\Upsilon}\\
A4~~~&  \lambda u\le\Psi&\implies~~\lambda u={\Psi}-{\Upsilon}
\end{eqnarray*}

Define
\begin{equation*}
\nabla = \frac{f(\frac{\Psi + \Upsilon}{\lambda}) -f(\frac{\Psi - \Upsilon}{\lambda})-f(\frac{-\Psi + \Upsilon}{\lambda})+f(\frac{-\Psi - \Upsilon}{\lambda})}{f(\frac{\Psi + \Upsilon}{\lambda}) +f(\frac{\Psi - \Upsilon}{\lambda})+f(\frac{-\Psi + \Upsilon}{\lambda})+f(\frac{-\Psi - \Upsilon}{\lambda})}
\end{equation*}
Clearly $-1 \leq \nabla \leq 1$. 

\begin{theorem}
\label{theorem:2}
Assume that the gradient step size is small enough so that we can neglect second order terms $O(\eta^2)$, and that $\frac{\partial\nabla}{\partial\Upsilon} \geq \frac{\Psi}{\Upsilon} - \frac{\Upsilon}{\Psi}~\forall\Upsilon$. Fix the difficulty score at $\Psi$. At time $t$ the expected convergence rate is monotonically increasing with the local difficulty $\Upsilon(\bx)$.
\end{theorem}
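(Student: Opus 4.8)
The plan is to turn the leading-order identity $\Delta(\Psi,\Upsilon)=4\eta\,\iE[r^2\lambda^2\cos^2\vartheta\vert_{\displaystyle{\Upsilon}}]+O(\eta^2)$ into a closed form in $\nabla$, and then differentiate with respect to $\Upsilon$. First I would drop the $O(\eta^2)$ terms, which is exactly what the smallness hypothesis permits, and pass to the Cartesian frame of the $2D$ section, writing $u=r\cos\vartheta$ so that the random variable to be averaged is simply $\lambda^2u^2$. By the Cartesian reformulation of Lemma~\ref{lemma:4}, once both $\Psi$ and $\Upsilon$ are fixed the quantity $\lambda u$ is pinned to one of the four numbers $\Psi+\Upsilon,\ -\Psi-\Upsilon,\ \Psi-\Upsilon,\ -\Psi+\Upsilon$, one value attached to each of the regions $A1$--$A4$. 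Consequently $\lambda^2u^2$ takes only the two values $(\Psi+\Upsilon)^2$ (on $A2$ and $A3$) and $(\Psi-\Upsilon)^2$ (on $A1$ and $A4$), and it remains only to find their conditional probabilities.

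Conditioning on $(\Psi,\Upsilon)$ confines $\bx$ to the four hyperplanes $\{\lambda u=\pm\Psi\pm\Upsilon\}$. On each of these branches $\Upsilon$ is an affine function of $u$ with the \emph{same} slope magnitude $\lambda$, and by the symmetry assumption~(\ref{eq:symm-assumption}) the two labels carry equal prior weight; hence the change-of-variables Jacobian and the label factor are common to all branches and cancel out of the conditional law. What remains is that the conditional weight of each branch is proportional to the data density $f$ evaluated at that branch's coordinate, i.e.\ at one of $\frac{\pm\Psi\pm\Upsilon}{\lambda}$. Adding the two weights attached to $(\Psi+\Upsilon)^2$ and the two attached to $(\Psi-\Upsilon)^2$ and normalising gives probability $\tfrac12(1+\nabla)$ for the value $(\Psi+\Upsilon)^2$ and $\tfrac12(1-\nabla)$ for $(\Psi-\Upsilon)^2$, with $\nabla$ precisely the ratio defined just before the theorem. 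Hence
\[
\iE[\lambda^2u^2\vert_{\displaystyle{\Upsilon}}]=\tfrac12(1+\nabla)(\Psi+\Upsilon)^2+\tfrac12(1-\nabla)(\Psi-\Upsilon)^2=\Psi^2+\Upsilon^2+2\Psi\Upsilon\,\nabla ,
\]
so that $\Delta(\Psi,\Upsilon)=4\eta\big(\Psi^2+\Upsilon^2+2\Psi\Upsilon\,\nabla(\Psi,\Upsilon)\big)$ up to $O(\eta^2)$.

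The final step is to fix $\Psi$ and differentiate: $\frac{1}{8\eta}\frac{\partial\Delta}{\partial\Upsilon}=\Upsilon+\Psi\nabla+\Psi\Upsilon\frac{\partial\nabla}{\partial\Upsilon}$, and I would show this is nonnegative. The hypothesis $\frac{\partial\nabla}{\partial\Upsilon}\ge\frac{\Psi}{\Upsilon}-\frac{\Upsilon}{\Psi}$ gives $\Psi\Upsilon\,\frac{\partial\nabla}{\partial\Upsilon}\ge\Psi^2-\Upsilon^2$; feeding this in, together with the bounds $-1\le\nabla\le1$ and (near $\Upsilon\to0$) the boundary value $\nabla=0$ at $\Upsilon=0$, collapses the statement to an elementary inequality in $\Psi,\Upsilon$ --- the hypothesis being calibrated so that the potentially negative term $\Psi\nabla$ is exactly absorbed. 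I expect the real obstacle to be the conditional-expectation step rather than this final algebra: one has to read the four-branch structure off Lemma~\ref{lemma:4}, argue that the per-branch Jacobian and the label prior genuinely cancel so that the conditional weights are controlled by $f$ at the four points $\frac{\pm\Psi\pm\Upsilon}{\lambda}$, and verify that the odd-in-$\cos\vartheta$ contributions discarded en route to~(\ref{cor:3}) remain negligible under the finer conditioning on $\Upsilon$ (not merely on $\Psi$). Once $\Delta(\Psi,\Upsilon)$ is in this explicit form, both the differentiation and the sign check are short.
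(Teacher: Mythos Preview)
Your proposal is essentially the paper's own argument: reduce $\Delta(\Psi,\Upsilon)$ to $4\eta\iE[\lambda^2u^2\vert_{\Upsilon}]$, read off from the four-branch decomposition of Lemma~\ref{lemma:4} that the conditional weights are proportional to $f$ at the four abscissae $\frac{\pm\Psi\pm\Upsilon}{\lambda}$, obtain the closed form $\frac{1}{4\eta}\Delta=\Psi^2+\Upsilon^2+2\Psi\Upsilon\,\nabla$, then differentiate and combine the bound $\nabla\ge-1$ with the hypothesis on $\partial\nabla/\partial\Upsilon$. Your intermediate observation that the two branch-pairs carry probabilities $\tfrac12(1\pm\nabla)$ is exactly what underlies the paper's one-line display, and your final differentiation step coincides with the paper's (the paper does not use the boundary value $\nabla(0)=0$, only $\nabla\ge-1$, so you can drop that ingredient).
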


\begin{proof}
In the coordinate system defined above $\Delta(\Psi,\Upsilon)=4\eta\iE[\lambda^2 u^2\vert_{\displaystyle{\Upsilon}}]+O(\eta^2)$. We compute $\Delta(\Psi,\Upsilon)$ separately in each region, marginalizing out $v$ based on the following
\begin{equation*}
\int  \int_0^{\infty} \lambda^2 u^2 v^{d-1}f(u,v) dv du  = \int \lambda^2 u^2 f(u)du
\end{equation*}
where $f(u)$ denotes the marginal distribution of $u$. 

Let $u_i$ denote the value of $u$ corresponding to score $\Upsilon$ in each region A1-A4, and $\frac{1}{2}f(u_i)$ its density. $\Delta(\Psi,\Upsilon)$ takes on 4 discrete values, one in each region, and its expected value is therefore $\Delta(\Psi,\Upsilon)=4\eta\sum_{i=1}^4 \lambda^2 u_i^2 \frac{f(u_i)}{\sum_{i=1}^4 f(u_i)}$. It can readily be shown that
\begin{equation}
\begin{split}
\frac{1}{4\eta}\Delta&(\Psi,\Upsilon)= 
 \Psi^2 + \Upsilon^2 +2\Psi\Upsilon\nabla
\end{split}
\end{equation}
and subsequently 
\begin{equation}
\begin{split}
\frac{1}{4\eta}\frac{\partial\Delta(\Psi,\Upsilon)}{\partial\Upsilon} &= 2\Upsilon+2\Psi\Upsilon
~\frac{\partial\nabla}{\partial\Upsilon}+2\Psi~\nabla \\
&\geq 2\Upsilon + 2\Psi\Upsilon
~\frac{\partial\nabla}{\partial\Upsilon}-2\Psi
\end{split}
\end{equation}

From the assumption that $\frac{\partial\nabla}{\partial\Upsilon} \geq \frac{\Psi}{\Upsilon} - \frac{\Upsilon}{\Psi}~\forall\Upsilon$, it follows that
\begin{equation*}
\frac{1}{8\eta}\frac{\partial\Delta(\Psi,\Upsilon)}{\partial\Upsilon} \geq \Upsilon +\Psi\Upsilon~\frac{\Psi-\Upsilon}{\Psi\Upsilon}-\Psi=0
\end{equation*}
\widowpenalty=10000
\end{proof}

\begin{corollary}
\label{corol:3}
For any $c\in \mathbb{R}^+$, if $\nabla$ is $(c-\frac{1}{c})$-Lipschitz then $\frac{\partial\Delta(\Psi,\Upsilon)}{
\partial\Upsilon} \geq 0$ for any $\Upsilon\geq c~\Psi$.
\end{corollary}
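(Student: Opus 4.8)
The plan is to piggyback on the computation carried out inside the proof of Theorem~\ref{theorem:2}, and merely verify that its controlling hypothesis is forced to hold on the range $\Upsilon\ge c\Psi$. Recall that that proof produces, with no extra assumptions,
\begin{equation*}
\frac{1}{4\eta}\,\frac{\partial\Delta(\Psi,\Upsilon)}{\partial\Upsilon}=2\Upsilon+2\Psi\Upsilon\,\frac{\partial\nabla}{\partial\Upsilon}+2\Psi\,\nabla ,
\end{equation*}
then uses $\nabla\ge-1$ to bound this below by $2\Upsilon+2\Psi\Upsilon\,\frac{\partial\nabla}{\partial\Upsilon}-2\Psi$, which is shown to be nonnegative whenever $\frac{\partial\nabla}{\partial\Upsilon}\ge\frac{\Psi}{\Upsilon}-\frac{\Upsilon}{\Psi}$. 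The point I would stress is that this last implication is \emph{pointwise} in $\Upsilon$: to get $\frac{\partial\Delta(\Psi,\Upsilon)}{\partial\Upsilon}\ge0$ at a single value of $\Upsilon$ it is enough that the derivative inequality hold at that value. Hence it suffices to show that $(c-\tfrac1c)$-Lipschitzness of $\nabla$ implies $\frac{\partial\nabla}{\partial\Upsilon}\ge\frac{\Psi}{\Upsilon}-\frac{\Upsilon}{\Psi}$ for every $\Upsilon$ with $\Upsilon\ge c\Psi$.

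This is exactly where the particular Lipschitz constant is tuned to fit. Lipschitzness with constant $c-\tfrac1c$ gives $\bigl|\frac{\partial\nabla}{\partial\Upsilon}\bigr|\le c-\tfrac1c$, hence $\frac{\partial\nabla}{\partial\Upsilon}\ge-(c-\tfrac1c)=\tfrac1c-c$. Next I would substitute $t=\Upsilon/\Psi$, so the target bound becomes $\frac{\Psi}{\Upsilon}-\frac{\Upsilon}{\Psi}=\tfrac1t-t=:\phi(t)$. Since $\phi'(t)=-\tfrac1{t^2}-1<0$, the map $\phi$ is strictly decreasing on $(0,\infty)$, so $t\ge c$ forces $\phi(t)\le\phi(c)=\tfrac1c-c$. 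Chaining the two inequalities, for every $\Upsilon\ge c\Psi$,
\begin{equation*}
\frac{\partial\nabla}{\partial\Upsilon}\ \ge\ \tfrac1c-c\ \ge\ \phi\!\left(\tfrac{\Upsilon}{\Psi}\right)\ =\ \frac{\Psi}{\Upsilon}-\frac{\Upsilon}{\Psi},
\end{equation*}
which is precisely the hypothesis needed, so the computation of Theorem~\ref{theorem:2} then delivers $\frac{\partial\Delta(\Psi,\Upsilon)}{\partial\Upsilon}\ge0$ throughout $\{\Upsilon\ge c\Psi\}$.

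I do not expect a genuine obstacle here — the argument is essentially two lines once the monotonicity of $\phi(t)=\tfrac1t-t$ is spotted — but a few bookkeeping remarks belong in the write-up. First, the constant $c-\tfrac1c$ is nonnegative only for $c\ge1$, so the statement is meaningful precisely in the regime $\Upsilon\ge c\Psi\ge\Psi$ where local difficulty dominates global difficulty (for $0<c<1$ the Lipschitz hypothesis cannot be met and the implication is vacuous). Second, I would state explicitly that we invoke the \emph{local} content of the proof of Theorem~\ref{theorem:2} rather than its statement: the statement assumes the derivative inequality for all $\Upsilon$, whereas here we only establish it on $\{\Upsilon\ge c\Psi\}$, which is fine because the nonnegativity conclusion there was obtained value by value. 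Third, Lipschitzness gives differentiability of $\nabla$ in $\Upsilon$ almost everywhere, which is what the derivative manipulations need; to sidestep the measure-zero subtleties one can instead run the identical argument with a finite-difference form of the bound.
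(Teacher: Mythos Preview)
Your proposal is correct and is exactly the argument the paper intends: the corollary is stated immediately after Theorem~\ref{theorem:2} without a separate proof, and the implied derivation is precisely to feed the Lipschitz bound $\frac{\partial\nabla}{\partial\Upsilon}\ge \tfrac1c-c$ into the pointwise inequality $\frac{\partial\nabla}{\partial\Upsilon}\ge \frac{\Psi}{\Upsilon}-\frac{\Upsilon}{\Psi}$ established inside that proof, using the monotonicity of $t\mapsto \tfrac1t-t$ on $\{t\ge c\}$. Your side remarks (pointwise use of the theorem's computation, the $c\ge 1$ caveat, and a.e.\ differentiability from Lipschitzness) are sound refinements that the paper leaves implicit.
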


\begin{corollary}
\label{corol:4}
If  ${\cal D}(\iX\vert_{\displaystyle{\Psi}})=k(\Psi)$ over a compact region and $\eta$ small enough, then $\frac{\partial\Delta(\Psi,\Upsilon)}{\partial\Upsilon} \geq 0$ for all $\Upsilon$ excluding the boundaries of the compact region. If in addition $\Upsilon>\Psi$, then $\frac{\partial\Delta(\Psi,\Upsilon)}{\partial\Upsilon} \geq 0$ almost surely.
\end{corollary}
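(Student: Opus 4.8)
The plan is to reduce the whole statement to the two identities already obtained in the proof of Theorem~\ref{theorem:2}, namely $\frac{1}{4\eta}\Delta(\Psi,\Upsilon)=\Psi^2+\Upsilon^2+2\Psi\Upsilon\nabla$ and $\frac{1}{4\eta}\frac{\partial\Delta(\Psi,\Upsilon)}{\partial\Upsilon}=2\Upsilon+2\Psi\Upsilon\frac{\partial\nabla}{\partial\Upsilon}+2\Psi\nabla$, both valid once $\eta$ is small enough to discard the $O(\eta^2)$ terms. Everything then becomes a statement about how $\nabla$ behaves as a function of $\Upsilon$ when the relevant density is flat.

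First I would note that $\nabla$ depends on $\Upsilon$ only through the four arguments $\frac{\pm\Psi\pm\Upsilon}{\lambda}$ at which the marginal $f$ is evaluated. If ${\cal D}(\iX\vert_{\Psi})$ equals the constant $k(\Psi)$ on a compact region $R$ (and is locally constant, e.g.\ vanishing, off $R$), then for every $\Upsilon$ for which all four points lie in the interior of $R$ the numerator of $\nabla$ equals $k(\Psi)-k(\Psi)-k(\Psi)+k(\Psi)=0$, so $\nabla\equiv 0$ on that open set of $\Upsilon$'s and hence $\frac{\partial\nabla}{\partial\Upsilon}=0$ there. Substituting into the derivative identity gives $\frac{1}{4\eta}\frac{\partial\Delta(\Psi,\Upsilon)}{\partial\Upsilon}=2\Upsilon\ge 0$, with no constraint relating $\Upsilon$ and $\Psi$; the excluded $\Upsilon$'s are exactly those for which one of the four points lands on $\partial R$, i.e.\ ``the boundaries of the compact region''. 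This settles the first assertion.

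For the second assertion I would argue that $\nabla$ is piecewise constant in $\Upsilon$. Each of the four points $\frac{\pm\Psi\pm\Upsilon}{\lambda}$ is monotone in $\Upsilon$, so each crosses $\partial R$ at only finitely many values of $\Upsilon$; between consecutive crossings the subset of the four points lying inside $R$ is fixed, so $\nabla$ takes a constant value $\nu\in[-1,1]$ on each of finitely many intervals, where the lower bound $\nu\ge -1$ is immediate from $f\ge 0$. Hence $\frac{\partial\nabla}{\partial\Upsilon}=0$ for all $\Upsilon$ outside a finite set, and there $\frac{1}{4\eta}\frac{\partial\Delta(\Psi,\Upsilon)}{\partial\Upsilon}=2\Upsilon+2\Psi\nu\ge 2(\Upsilon-\Psi)\ge 0$ whenever $\Upsilon\ge\Psi$. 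Since the exceptional set has measure zero, this gives $\frac{\partial\Delta(\Psi,\Upsilon)}{\partial\Upsilon}\ge 0$ almost surely.

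The routine part is the two identities, inherited verbatim from Theorem~\ref{theorem:2}. The part needing care, and the main obstacle, is the bookkeeping behind ``$\nabla$ is piecewise constant'': one must pin down what ``constant over a compact region'' means for the marginal $f(u)$ that actually appears in $\nabla$, check that the four evaluation points sweep through $\partial R$ at only finitely many $\Upsilon$'s so the exceptional set is genuinely null, and verify that across all in/out patterns the single uniform bound $\nu\ge -1$ is all one needs. The degenerate case where all four points leave $R$ at once (so the denominator of $\nabla$ vanishes) should also be flagged, although it does not arise generically.
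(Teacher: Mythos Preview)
The paper does not supply an explicit proof of this corollary; it is stated as an immediate consequence of the two identities derived inside the proof of Theorem~\ref{theorem:2} (and, for the second clause, of Corollary~\ref{corol:3} specialized to $c=1$). Your proposal reconstructs exactly that intended argument: when the relevant density is locally equal to $k(\Psi)$, the numerator of $\nabla$ cancels to zero so $\nabla\equiv0$ and $\partial_\Upsilon\nabla=0$, giving $\tfrac{1}{4\eta}\partial_\Upsilon\Delta=2\Upsilon\ge0$; when some of the four evaluation points may lie outside the region, $\nabla$ is piecewise constant with $\nabla\ge-1$, so $\partial_\Upsilon\nabla=0$ a.e.\ and $\tfrac{1}{4\eta}\partial_\Upsilon\Delta=2\Upsilon+2\Psi\nabla\ge2(\Upsilon-\Psi)\ge0$ whenever $\Upsilon>\Psi$.

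One small overstatement to tighten: you write that ``the excluded $\Upsilon$'s are exactly those for which one of the four points lands on $\partial R$.'' Your first-clause argument actually only covers the $\Upsilon$'s for which \emph{all four} evaluation points lie in the interior; configurations where some points sit in the exterior of $R$ (not on $\partial R$) also fall outside that argument, since then $\nabla$ need not vanish. This is more a looseness in the corollary's own phrasing of ``excluding the boundaries'' than a gap in your reasoning, and your second-clause argument already picks up those cases once $\Upsilon>\Psi$. The technical caveats you flag (passing from constancy of ${\cal D}(\iX\vert_\Psi)$ to constancy of the marginal $f(u)$, finiteness of the crossing set, and the degenerate all-outside case where the denominator of $\nabla$ vanishes) are genuine and worth stating, though the paper elides them.
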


\begin{theorem}
\label{theorem:3}
Assume  ${\cal D}(\iX)$ is continuous and $\bar\bw$ is realizable. Then there are always hypotheses $\bw\in{\cH}$ for which the expected convergence rate under ${\cal D}(\iX)$ is monotonically decreasing with the local difficulty $\Upsilon(\bX)$.
\end{theorem}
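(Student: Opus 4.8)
The plan is to prove this existence statement by an explicit and very simple choice of current hypothesis: $\bw_t = \bar\bw$ itself (or, in the degenerate case discussed below, a hypothesis an arbitrarily small distance from it). The underlying mechanism is that at the optimum the first-order drift of SGD toward $\bar\bw$ disappears, and the one-step dynamics is governed entirely by the second-order ``overshoot'' term, whose magnitude grows with the loss at the sampled point; so the examples with larger local difficulty are exactly the ones that knock $\bw_t$ furthest away from $\bar\bw$. Concretely, with $\bw_t=\bar\bw$ we have $\lambda=\Vert\bw_t-\bar\bw\Vert=0$, so in Lemma~\ref{lemma:2} the term $2\lambda\,\iE[\bs_{\OO}\mid\Upsilon]$ vanishes and $\Delta(\Upsilon)=-\iE[\bs^2\mid\Upsilon]$; also the local and global difficulties coincide, $\Upsilon(\bx)^2=L(\bX,\bw_t)=(\bx\cdot\bar\bw-y)^2=\Psi(\bx)^2$. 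Since $\bs^2\propto\eta^2\Vert\bx\Vert^2(\bx\cdot\bw_t-y)^2=\eta^2\Vert\bx\Vert^2\,\Upsilon^2$, and since under the symmetry assumption (\ref{eq:symm-assumption}) the conditional distribution of $\bx$ given the difficulty value does not depend on it (the same fact that lets $\iE[r^2]$ be treated as $\Psi$-independent in the proof of Theorem~\ref{theorem:1}), this gives the closed form $\Delta(\Upsilon)=-c\,\eta^2\,\Upsilon^2\,\iE[\Vert\bx\Vert^2]$ with $c>0$, which is strictly negative for $\Upsilon>0$ and strictly decreasing in $\Upsilon$. Equivalently, this is just the $\lambda=0$ specialization of the $\Psi$-dependence isolated in Theorem~\ref{theorem:1}, transported to $\Upsilon$ via $\Upsilon=\Psi$; continuity of $\mathcal D(\iX)$ is used only so that $\Upsilon$ takes a nondegenerate range of values and $\iE[\Vert\bx\Vert^2]<\infty$.

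If ``realizable'' is read in the strong sense that $\bar\bw$ attains zero loss (so $\Psi\equiv0$), the choice $\bw_t=\bar\bw$ forces $\Upsilon\equiv0$ and the statement becomes vacuous; there I would take $\bw_t=\bar\bw+\epsilon\,\bv$ with $\Vert\bv\Vert=1$ and $\epsilon>0$ small. Then $\Upsilon(\bx)=\epsilon\,\vert\bx\cdot\bv\vert$, and inserting (\ref{eq:gradient_full}) into (\ref{eq:Delta}) gives $\Delta(\Upsilon)=4\eta\,\Upsilon^2-c\,\eta^2\,\Upsilon^2\,\iE[\Vert\bx\Vert^2\mid\Upsilon]$. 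I would pick $\bv$ so that $\bx\cdot\bv$ keeps a constant sign on the support of $\mathcal D(\iX)$ — possible, by continuity of $\mathcal D(\iX)$, after an affine translation of the data that only shifts the bias coordinate of $\bar\bw$ — so that $\Upsilon$ lives on a compact interval bounded away from $0$; since $\Vert\bx\Vert\ge\vert\bx\cdot\bv\vert=\Upsilon/\epsilon$, shrinking $\epsilon$ then makes the negative term $c\,\eta^2\,\Upsilon^2\,\iE[\Vert\bx\Vert^2\mid\Upsilon]$ dominate the drift uniformly on that interval.

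The main obstacle is this last step: in general $\iE[\Vert\bx\Vert^2\mid\Upsilon]$ need not be monotone in $\Upsilon$, so what must be verified is that $\Upsilon\mapsto\Upsilon^2\,\iE[\Vert\bx\Vert^2\mid\Upsilon]$ is \emph{increasing over the whole compact range of $\Upsilon$}, not merely for large $\Upsilon$. Keeping $\Upsilon$ away from $0$ and taking $\epsilon$ small is exactly what forces the deterministic lower bound $(\Upsilon/\epsilon)^2$ to be the dominant contribution to the conditional second moment, so that $\Upsilon^2\,\iE[\Vert\bx\Vert^2\mid\Upsilon]$ inherits the monotonicity of $\Upsilon^4$; making this quantitative — bounding the oscillation of the conditional moment over a compact set using the continuity of $\mathcal D(\iX)$ (or the Bayesian averaging of the footnote to (\ref{eq:symm-assumption}) if one prefers) — is the estimate that needs real care. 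Under the weaker reading of ``realizable'', the $\bw_t=\bar\bw$ argument above sidesteps this entirely.
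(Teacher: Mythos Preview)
Your core mechanism --- at (or near) $\bar\bw$ the first-order drift toward the optimum vanishes and only the second-order overshoot $-\eta^2\Upsilon^2\,\iE[r^2]$ survives --- is exactly the paper's idea. The difference is in how the argument is packaged. The paper does \emph{not} set $\bw_t=\bar\bw$; it keeps $\bw_t$ strictly away from $\bar\bw$, recentres the hyperspherical coordinates at $\bw_t$ (so $\vartheta$ is measured against $\bar\bw-\bw_t$), and writes
\[
\Delta(\Upsilon)=2\eta\,\Upsilon\,Q(r,\vartheta,\bw_t)-4\eta^2\Upsilon^2\,\iE[r^2],
\qquad
Q(\bw_t)=\iE[r\cos\vartheta\mid \bx\cdot\bw_t-y=-\Upsilon]-\iE[r\cos\vartheta\mid \bx\cdot\bw_t-y=\Upsilon].
\]
It then observes that $Q(\bar\bw)=0$ from the $\Psi$-symmetry of $\mathcal D(\iX)$, and invokes \emph{continuity of $\mathcal D(\iX)$} to get a $\delta>0$ with $|Q(\bw)|<\eta\Upsilon\,\iE[r^2]$ whenever $\|\bw-\bar\bw\|<\delta$; this yields $\Delta(\Upsilon)<-2\eta^2\Upsilon^2\,\iE[r^2]$ for every such $\bw$. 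Two things this buys over your approach: (i) it produces an entire $\delta$-ball of hypotheses (matching the plural ``hypotheses'' in the statement), and (ii) it never lands on the degenerate point $\bw_t=\bar\bw$, so the realizable-case vacuity you flag simply does not arise, and the continuity of $\mathcal D(\iX)$ in the hypotheses is what does the real work.

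Your fallback construction for the strong-realizable reading --- perturb to $\bar\bw+\epsilon\bv$, translate the data so that $\bx\cdot\bv$ has constant sign, and force $\Upsilon^2\,\iE[\|\bx\|^2\mid\Upsilon]$ to be monotone by shrinking $\epsilon$ --- is more delicate than what the paper needs, and the step you yourself mark as the obstacle (controlling the oscillation of the conditional second moment over the whole range of $\Upsilon$) is not obviously closable from continuity of $\mathcal D(\iX)$ alone. The paper's route sidesteps this by bounding the \emph{linear} term $2\eta\Upsilon Q$ rather than the conditional moment; you should note, though, that the paper is also informal here --- its final line establishes $\Delta(\Upsilon)<-2\eta^2\Upsilon^2\,\iE[r^2]<0$, which is a negative upper envelope quadratic in $\Upsilon$, but strict monotonicity of $\Delta$ itself still tacitly relies on $\iE[r^2]$ and $Q/\Upsilon$ not depending strongly on $\Upsilon$. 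So your diagnosis of the underlying difficulty is correct; the paper just absorbs it into a one-line continuity bound rather than the more quantitative estimate you sketch.
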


\begin{proof}
We shift to a hyperspherical coordinate system in $\iR^{d+1}$ similar as before, but now the pole (origin) is fixed at $\bw_t$. For the gradient step $\bs$, it can be shown that:
\begin{equation}
\begin{split}
\bs &= -\sgn{(\bx\cdot\bw_t-y)}2\eta\bx \Upsilon   \\
\bs_{\OO}&=\bs\cdot\frac{\bar\bw-\bw_t}{\lambda}=\pm \frac{2\eta}{\lambda}r\lambda\cos\vartheta~{\Upsilon}
\end{split}
\end{equation}

Let $\Delta(\Upsilon)$ denote the expected convergence rate at time $t$, given fixed $\Upsilon$. From Lemma~\ref{lemma:2}
\begin{equation*}
\begin{split}
\Delta(\Upsilon) = 2\eta\Upsilon\bigg (&\iE[r\cos\vartheta\vert_{\displaystyle{\bx\cdot\bw_t-y=-\Upsilon}}] - 
\\ &\iE[r\cos\vartheta\vert_{\displaystyle{\bx\cdot\bw_t-y=\Upsilon}}]\bigg ) - \iE[(2\eta r\Upsilon)^2] \\
&\hspace{-15mm}\doteq 2\eta\Upsilon Q(r,\vartheta,\bw_t) - 4\eta^2\Upsilon^2\iE[r^2]
\end{split}
\end{equation*}

If $\bw=\bar\bw$, then $Q(r,\vartheta,\bw)=0$ from the symmetry of ${\cal D}(\iX)$ with respect to $\Psi$. From the continuity of ${\cal D}(\iX)$, there exists $\delta>0$ such that if $\Vert \bw-\bar\bw\Vert_2<\delta$, then $\Vert Q(r,\vartheta,\bw)-Q(r,\vartheta,\bar\bw)\Vert_2<\eta\Upsilon\iE[r^2]$, which implies that $\Delta(\Upsilon)<-2\eta^2\Upsilon^2\iE[r^2]<0$.
\widowpenalty=10000
\end{proof}

\section{Classification with the Hinge Loss}
\label{sec:hinge}

We now analyze hinge loss optimization in the context of binary classification. As in (\ref{eq:reg}), we adopt the notation where $\bx$ denotes the vector $[\bx,1]^t\in\iR^{d+1}$. The hypothesis $\bw\in \mathbb{R}^{d+1}$ defines a linear separator which includes a bias term, and the predicted class for  example $\bx$ is ${y}=sign(\bx\cdot \bw)$. The hinge loss function is defined as:
\begin{equation}
L(\bX, \bw) = \max(1-(\bx\cdot\bw) y , 0)
\label{eq:hinge}
\end{equation}

Since in (\ref{eq:hinge}) the margin is fixed at 1, it is desirable (and customarily done) to force a constraint on the length of the parameters vector $\Vert\bw\Vert$. Without loss of generality we use the constraint $\Vert\bw\Vert^2=1$ (see Appendix~\ref{app:general_normalization} for the relaxation of this constraint), which leads to the following optimization problem with Lagrange multiplier $\lambda$:
\begin{equation}
\bar\bw = \mathop{\argmin}_\bw \left [ \max(1-(\bx\cdot\bw) y , 0) + \lambda \Vert \bw \Vert ^2 \right ]
\label{eq:soft-SVM}
\end{equation}
Note that (\ref{eq:soft-SVM}) defines the soft-margin SVM classifier. 

When using GD, instead of minimizing the argument of (\ref{eq:soft-SVM}), one can minimize (\ref{eq:hinge}) directly in each step and subsequently project the solution onto the feasible set (aka \emph{projected gradient descent}). This is the procedure we analyze here, with the following update rule (similar to (\ref{eq:gradient_t}))
\begin{equation}
    \bw_{t +1}=\bw_{t } +\eta \hspace{0.2mm}\bs, ~~~
    \bs = \left.
  \begin{cases}
   \bx y &  (\bx\cdot\bw) y  \leq 1 \\
    0 &  elsewhere
  \end{cases}
  \right\}
\label{eq:step}
\end{equation}
Projection $\bw_{t+1}=\frac{\bw_{t+1}}{\Vert\bw_{t+1}\Vert}$ follows this gradient step. 

Given the normalization constraint on the parameters vector $\bw$, a suitable metric for comparing two such vectors is the cosine similarity between them (or their normalized inner product), in preference over the Euclidean distance between the vectors. We therefore define the expected convergence rate for a given \emph{Difficulty Score} $\Psi$ as
\begin{equation*}
    \Delta (\Psi) = \iE \left[ \frac{\bw_{t  + 1}\cdot \bar{\bw}}{\Vert \bw_{t  + 1}\Vert \hspace{0.1mm}\Vert \bar{\bw} \Vert } - \frac{\bw_{t }\cdot \bar{\bw}}{\Vert \bw_{t }\Vert \hspace{0.1mm}\Vert \bar{\bw} \Vert } \; \bigg{\vert}_{\displaystyle{\Psi}} \right]
\end{equation*}
Note that by definition 
$\Vert \bar{\bw} \Vert = \Vert \bw_{t } \Vert = 1$. 
\begin{figure}[th!]
	\centering
	\includegraphics[width=0.6\textwidth]{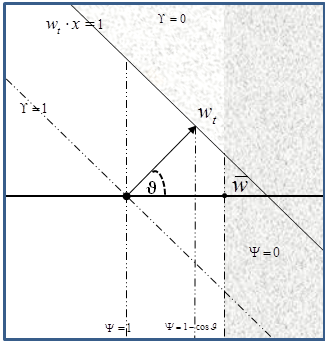}
    \caption{The geometry of the gradient step at time $t$ (see text). 
    \label{fig:hinge-plain}}
\end{figure}

Because the hinge loss is piecewise linear, we use the identity function $g(x)=x$ in the definition of the \emph{Difficulty Scores}, so that $\Psi(\bX)=L(\bX,\bar\bw)$ and $\Upsilon(\bX)=L(\bX,\bw_t)$. 

In the following analysis we use a fixed Cartesian coordinate system where the first coordinate axis is defined by $\bar{\bw}$, and the plane defined by the first and second axes is the subspace spanned by $\bar{\bw}$ and $\bw_{t }$ (see Fig.~\ref{fig:hinge-plain}). We assume w.l.o.g that $y=1$ (similar analysis applies to the symmetrical case of $y=-1$). By definition, in this coordinate system we have
\begin{equation}
\label{eq:coors}
    \bar{\bw} = [1,0,\ldots]^t
     ,~~~
\bw_{t } = [\cos\vartheta, \sin\vartheta \ldots]^t
\end{equation}
where $0\le\vartheta\le\pi$ denotes the angle between $\bar{\bw}$ and $\bw_{t }$. 

It follows that all points with \emph{Difficulty Score} $\Psi>0$ lie on a hyperplane defined by $\bx\cdot\bar\bw=1-\Psi$, where from (\ref{eq:coors})
\begin{equation}
\label{eq:x-vec}
     \bx\vert_{\displaystyle{\Psi}} = [1-\Psi,x_2,\cdots,x_{d+1}]^t
\end{equation}
The expected convergence rate can now be written as follows
\begin{equation}
\label{eq:delta_reduced}
    \Delta (\Psi) = \iE \left[  
    \frac{\cos{\vartheta} + \eta(1-\Psi)}{\Vert \bw_{t  + 1} \Vert} - \cos{\vartheta} \; \bigg{\vert}_{\displaystyle{\Psi}}
    \right]
\end{equation}

\subsection{Convergence rate decreases with \emph{global difficulty}}
\label{sec:hinge-global}

The main theorem in this section states that when minimizing the hinge loss, the expected convergence rate decreases with the global difficulty score $\Psi$. 

Before stating the first lemma, we note that from (\ref{eq:step})-(\ref{eq:x-vec}) 
\begin{equation}
\label{eq:hinge-tplus-norm}
    \Vert \bw_{t  + 1} \Vert = \sqrt{1+2\eta[(1-\Psi)\cos\vartheta+x_{2}\sin\vartheta]+\eta^{2}\Vert \bx \Vert ^ 2} 
\end{equation}
and
\begin{equation*}
    \bar{\bw}\cdot \bw_{t+1} = \cos \vartheta + \eta (1 - \Psi)
\end{equation*}

\begin{lemma}
\label{lemma:gradlim}
Let $\bX=[\bx,y]$ denote an example with Difficulty Score $\Psi > 0$, then
\begin{equation}
    \bx\cdot\bw_t < 1 \iff 
    x_2 < \frac{\Psi-1}{\tan\vartheta}+\frac{1}{\sin\vartheta}  \doteq  \mathcal{B}(\Psi) 
\label{eq:bound}
\end{equation}
\end{lemma}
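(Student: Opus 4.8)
The plan is simply to expand the inner product $\bx\cdot\bw_t$ in the fixed Cartesian coordinate system already set up in (\ref{eq:coors}) and (\ref{eq:x-vec}), and then solve the resulting linear inequality for the coordinate $x_2$.

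First I would substitute $\bw_t = [\cos\vartheta,\sin\vartheta,0,\dots,0]^t$ and $\bx\vert_{\Psi} = [1-\Psi,x_2,x_3,\dots,x_{d+1}]^t$, so that all coordinates beyond the second contribute nothing to the dot product and $\bx\cdot\bw_t = (1-\Psi)\cos\vartheta + x_2\sin\vartheta$. Hence the condition $\bx\cdot\bw_t < 1$ is equivalent to $x_2\sin\vartheta < 1 - (1-\Psi)\cos\vartheta$. Dividing through by $\sin\vartheta$ and rewriting the right-hand side, $\frac{1 - (1-\Psi)\cos\vartheta}{\sin\vartheta} = \frac{1}{\sin\vartheta} - \frac{(1-\Psi)\cos\vartheta}{\sin\vartheta} = \frac{1}{\sin\vartheta} + \frac{\Psi-1}{\tan\vartheta} = \mathcal{B}(\Psi)$, which gives the claimed equivalence. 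Since every manipulation is reversible, the biconditional holds in both directions.

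The only point requiring a word of care — and it is the closest thing to an obstacle here — is the division by $\sin\vartheta$: it preserves the direction of the inequality precisely because $\sin\vartheta > 0$. This is guaranteed by the setup, in which $0 < \vartheta < \pi$ (the coordinate frame of (\ref{eq:coors}) presumes $\bw_t$ is not parallel to $\bar\bw$); the degenerate cases $\vartheta\in\{0,\pi\}$ play no role in the subsequent analysis and may be disregarded. It is also worth noting, as a remark rather than part of the proof, that under the w.l.o.g. assumption $y=1$ the right-hand side $\mathcal{B}(\Psi)$ is exactly the threshold on $x_2$ below which the gradient step (\ref{eq:step}) is active, which is why $\mathcal{B}(\Psi)$ will later appear as a limit of integration in the computation of $\Delta(\Psi)$.
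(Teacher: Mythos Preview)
Your proof is correct and follows essentially the same route as the paper: expand $\bx\cdot\bw_t$ using the coordinate representations in (\ref{eq:coors})--(\ref{eq:x-vec}) to obtain $(1-\Psi)\cos\vartheta + x_2\sin\vartheta$, then rearrange the inequality $\bx\cdot\bw_t<1$ to isolate $x_2$. Your added remark about needing $\sin\vartheta>0$ to preserve the inequality direction is a worthwhile clarification that the paper leaves implicit.
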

\begin{proof}
From (\ref{eq:coors})-(\ref{eq:x-vec}) it follows that
\begin{equation*}
    \bx\cdot \bw_t =  \left(1 - \Psi \right)\cos{\vartheta} + x_2\sin{\vartheta} 
\end{equation*}
and therefore 
\begin{equation*}
    \bx\cdot \bw_t < 1 ~\iff~ \frac{\cos{\vartheta}}{\sin{\vartheta}}\left(  1 - \Psi \right) + x_2 <  \frac{1}{\sin{\vartheta}}
\end{equation*}
\widowpenalty=10000
\end{proof}

Lemma~\ref{lemma:gradlim} defines the range of $x_2$ for which $\Upsilon>0$, namely, the local \emph{Difficulty Score} is positive (see Fig.~\ref{fig:hinge-plain}), while the global \emph{Difficulty Score} is fixed at $\Psi$. This can be used to compute $\Delta (\Psi)$ from (\ref{eq:delta_reduced}) and obtain
\begin{lemma}
\label{lemma:integral}
Assume $\eta$ is small enough, then 
\begin{equation*}
\begin{split}
    \Delta (\Psi ) &= \int_{-\infty}^{\mathcal{B}(\Psi)}
    \eta [
    (1-\Psi)\sin^2\vartheta -x_2 \sin{\vartheta} \cos{\vartheta}]\\ &\hspace{1.25in}\cdot f(x_2) dx_2 +O(\eta^2)
\end{split}
\end{equation*}
where $f(x_2)$ denotes the marginal distribution of $\bx$ over the second axis, and ${\mathcal{B}(\Psi)}$ is defined in (\ref{eq:bound}).
\end{lemma}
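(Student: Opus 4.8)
The plan is to start from the reduced expression \eqref{eq:delta_reduced} for $\Delta(\Psi)$, isolate the only nonlinear ingredient — the normalization factor $1/\Vert\bw_{t+1}\Vert$ given by \eqref{eq:hinge-tplus-norm} — and replace it by its first–order Taylor expansion in $\eta$, then take the conditional expectation term by term.

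First I would split the conditional expectation in \eqref{eq:delta_reduced} according to whether the gradient step vanishes. With $y=1$ fixed w.l.o.g., \eqref{eq:step} gives $\bs=\bx$ exactly when $\bx\cdot\bw_t\le 1$, which by Lemma~\ref{lemma:gradlim} is the event $\{x_2<\mathcal{B}(\Psi)\}$; on its complement $\bs=0$, so $\bw_{t+1}=\bw_t$ (the projection is the identity since $\Vert\bw_t\Vert=1$), and the bracketed quantity in $\Delta(\Psi)$ is $\cos\vartheta-\cos\vartheta=0$. Hence the entire contribution comes from $\{x_2<\mathcal{B}(\Psi)\}$, where $\bar\bw\cdot\bw_{t+1}=\cos\vartheta+\eta(1-\Psi)$ and $\Vert\bw_{t+1}\Vert$ is as in \eqref{eq:hinge-tplus-norm}.

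On that region I would expand: writing $\Vert\bw_{t+1}\Vert^2=1+2\eta A+\eta^2\Vert\bx\Vert^2$ with $A=(1-\Psi)\cos\vartheta+x_2\sin\vartheta$, the identity $(1+2\eta A+\eta^2\Vert\bx\Vert^2)^{-1/2}=1-\eta A+O(\eta^2)$ yields
\begin{equation*}
\frac{\cos\vartheta+\eta(1-\Psi)}{\Vert\bw_{t+1}\Vert}-\cos\vartheta=\eta\big[(1-\Psi)-A\cos\vartheta\big]+O(\eta^2).
\end{equation*}
Substituting $A\cos\vartheta=(1-\Psi)\cos^2\vartheta+x_2\sin\vartheta\cos\vartheta$ and using $1-\cos^2\vartheta=\sin^2\vartheta$ collapses the first–order term to $\eta[(1-\Psi)\sin^2\vartheta-x_2\sin\vartheta\cos\vartheta]$, which is precisely the claimed integrand. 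Finally I would take the conditional expectation: since $\Psi>0$ forces $\bx\cdot\bar\bw=1-\Psi$, i.e. fixes the first coordinate, and the integrand depends on $\bx$ only through $x_2$, marginalizing $x_3,\dots,x_{d+1}$ leaves the density $f(x_2)$ and produces the stated integral over $x_2\in(-\infty,\mathcal{B}(\Psi))$.

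The main obstacle is making the $O(\eta^2)$ uniform under the conditional expectation, rather than just pointwise in $\bx$. The second–order coefficient in the Taylor remainder depends on $\bx$ through $A$ and $\Vert\bx\Vert^2$, and the expansion is only legitimate where $1+2\eta A+\eta^2\Vert\bx\Vert^2>0$; to conclude $\iE\big[O(\eta^2)\,\vert_{\Psi}\big]=O(\eta^2)$ one needs an integrability/tail condition on the conditional law of $\bx$ (finiteness of $\iE[\Vert\bx\Vert^2\,\vert_{\Psi}]$ and of the relevant moment of $x_2$), which is what "assume $\eta$ small enough" is implicitly relying on. I would make this explicit: for $\eta$ below a threshold set by the support of the conditional distribution, bound the remainder by $C\eta^2(1+\Vert\bx\Vert^2)$ and invoke dominated convergence to push the expectation inside.
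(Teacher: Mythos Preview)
Your proposal is correct and follows essentially the same route as the paper: restrict to the region $\{x_2<\mathcal{B}(\Psi)\}$ via Lemma~\ref{lemma:gradlim}, Taylor expand $1/\Vert\bw_{t+1}\Vert$ to first order in $\eta$ to obtain the integrand $\eta[(1-\Psi)\sin^2\vartheta-x_2\sin\vartheta\cos\vartheta]$, and marginalize out $x_3,\dots,x_{d+1}$. Your treatment of the $O(\eta^2)$ remainder is in fact more careful than the paper's, which simply assumes finiteness of the relevant integrals without spelling out the moment conditions or invoking dominated convergence.
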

\begin{proof}
Recall that the first coordinate of $\bx$ with \emph{Difficulty Score} fixed at $\Psi>0$ is constant at $x_1=1-\Psi$. We compute $\Delta (\Psi)$ using (\ref{eq:delta_reduced})  and Lemma~\ref{lemma:gradlim}:
\begin{equation}
\label{eq:fullintegral}
\begin{split}
    \Delta (\Psi) &= \int_{-\infty}^{\mathcal{B}(\Psi)}\ldots\int \mathcal{I}
    ~ f(x \vert_{\displaystyle{\Psi}}) dx_{d+1} \ldots dx_2 \\
    \mathcal{I} &= \frac{\cos{\vartheta} + (1-\Psi)\cdot \eta}{ \Vert \bw_{t+1} \Vert} -\cos{\vartheta}  
\end{split}
\end{equation}
where $\Vert \bw_{t+1} \Vert$ is defined in (\ref{eq:hinge-tplus-norm}).

Under the assumption that $\eta$ is small enough, we approximate the integrand $\mathcal{I}$ in (\ref{eq:fullintegral}) using the first terms of its Taylor expansion at $\eta = 0$, which yields
\begin{equation*}
\begin{split}
    \mathcal{I} &\approx \eta \Bigl[ 
    (1-\Psi) - \frac{2\bigl( (1-\Psi)\cos{\vartheta} + x_2 \sin{\vartheta}
    \bigr)\cos{\vartheta}}{2}
    \Bigr] \\
    &= \eta [ 
    (1 - \cos^2\vartheta)(1-\Psi) - x_2 \sin{\vartheta}\cos{\vartheta}] \\  
    &=\eta [ (1-\Psi)\sin^2\vartheta -x_2\sin \vartheta \cos \vartheta ] 
\end{split}
\end{equation*}
We see that $\mathcal{I}$ only depends on $x_2$, and we can therefore integrate out the remaining integration variables $x_3,\ldots ,x_{d+1}$. Let $f(x_2)$ denote the marginal distribution of $x_2$. Then
\begin{equation*}
 \Delta (\Psi) \approx \int_{-\infty}^{\mathcal{B}(\Psi)}\eta [(1-\Psi)\sin^2\vartheta -x_2\sin \vartheta\cos \vartheta] f(x_2)dx_2
\end{equation*}
In the above derivation we assumed that the resulting integral is finite, and so is the integral in $\iR^{d+1}$ of the remaining terms in the Taylor expansion corresponding to $O(\eta^2)$.
\end{proof}

We can now state the main theorem of this section. 
\begin{theorem}
\label{thm:hingeglobal}
Assume that the gradient step size is small enough so that we can neglect second order terms $O(\eta^2)$. The expected convergence rate decreases monotonically as a function of $\Psi$ for every $\Psi > (1 - \cos{\vartheta})$ when $\cos\vartheta> 0$ ($\bar\bw,\bw_t$ are positively correlated), and for every $\Psi < (1 - \cos{\vartheta})$ when $\cos\vartheta< 0$. Monotonicity holds $\forall \Psi$ when $\cos\vartheta= 0$.
\end{theorem}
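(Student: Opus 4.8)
The plan is to differentiate the first-order expression for $\Delta(\Psi)$ supplied by Lemma~\ref{lemma:integral} and to verify that $\partial\Delta/\partial\Psi\le 0$ in each of the three regimes. Write $\Delta(\Psi)=\eta\,G(\Psi)+O(\eta^2)$ with
\[
G(\Psi)=\int_{-\infty}^{\mathcal{B}(\Psi)} h(x_2,\Psi)\,f(x_2)\,dx_2,\qquad
h(x_2,\Psi)=(1-\Psi)\sin^2\vartheta-x_2\sin\vartheta\cos\vartheta,
\]
and recall from Lemma~\ref{lemma:gradlim} that $\mathcal{B}(\Psi)=\frac{(\Psi-1)\cos\vartheta+1}{\sin\vartheta}$, so that $\mathcal{B}'(\Psi)=\cos\vartheta/\sin\vartheta$. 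Throughout I assume the non-degenerate orientation $\vartheta\in(0,\pi)$, i.e. $\sin\vartheta>0$; the collinear cases $\vartheta\in\{0,\pi\}$ are trivial or can be dispatched separately.

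Applying the Leibniz rule,
\[
G'(\Psi)=h(\mathcal{B}(\Psi),\Psi)\,f(\mathcal{B}(\Psi))\,\mathcal{B}'(\Psi)+\int_{-\infty}^{\mathcal{B}(\Psi)}\partial_\Psi h(x_2,\Psi)\,f(x_2)\,dx_2 .
\]
Since $\partial_\Psi h=-\sin^2\vartheta$ (as $\vartheta$ is held fixed), the integral term equals $-\sin^2\vartheta\,F(\mathcal{B}(\Psi))$, where $F$ is the cdf of $x_2$; this is always $\le 0$. For the boundary term, the key observation is that $h$ can be rewritten as $h(x_2,\Psi)=(1-\Psi)-(\bx\cdot\bw_t)\cos\vartheta$, and at $x_2=\mathcal{B}(\Psi)$ we have $\bx\cdot\bw_t=1$ (the edge of the region where $\Upsilon>0$), so $h(\mathcal{B}(\Psi),\Psi)=(1-\Psi)-\cos\vartheta$. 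Hence
\[
G'(\Psi)=\bigl[(1-\Psi)-\cos\vartheta\bigr]\frac{\cos\vartheta}{\sin\vartheta}\,f(\mathcal{B}(\Psi))-\sin^2\vartheta\,F(\mathcal{B}(\Psi)).
\]

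It then remains to sign the boundary term, which is a short case analysis. If $\cos\vartheta>0$ then $\frac{\cos\vartheta}{\sin\vartheta}f(\mathcal{B}(\Psi))\ge 0$, so the boundary term is $\le 0$ exactly when $(1-\Psi)-\cos\vartheta\le 0$, i.e. $\Psi>1-\cos\vartheta$; if $\cos\vartheta<0$ the prefactor is $\le 0$ and the boundary term is $\le 0$ exactly when $(1-\Psi)-\cos\vartheta\ge 0$, i.e. $\Psi<1-\cos\vartheta$; and if $\cos\vartheta=0$ the boundary term vanishes and $G'(\Psi)=-\sin^2\vartheta\,F(\mathcal{B}(\Psi))\le 0$ for all $\Psi$. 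In every case $G'(\Psi)\le 0$, so $\Delta(\Psi)$ is monotonically decreasing on the claimed domain, which is the assertion of the theorem.

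I expect the main obstacle to be the boundary-term simplification $h(\mathcal{B}(\Psi),\Psi)=(1-\Psi)-\cos\vartheta$: obtaining this clean cancellation — and recognizing it as ``the integrand evaluated on the locus $\Upsilon=0$'' — is what makes the subsequent sign bookkeeping transparent; without it one is left with an unwieldy mixture of $\sin\vartheta,\cos\vartheta,\Psi$ terms whose sign is not obviously controlled by the stated thresholds. A minor secondary point is justifying differentiation under the integral sign and the Leibniz boundary contribution, which relies on the finiteness and regularity of $f$ already invoked in Lemma~\ref{lemma:integral}.
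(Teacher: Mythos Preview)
Your proof is correct and follows essentially the same route as the paper: differentiate the first-order expression from Lemma~\ref{lemma:integral} via the Leibniz rule, observe that the interior term $-\sin^2\vartheta\,F(\mathcal{B}(\Psi))$ is always nonpositive, simplify the boundary integrand to $1-\cos\vartheta-\Psi$, and then do the three-way case split on the sign of $\cos\vartheta$. Your recognition that $h(\mathcal{B}(\Psi),\Psi)=(1-\Psi)-(\bx\cdot\bw_t)\cos\vartheta\big|_{\bx\cdot\bw_t=1}$ is a slightly cleaner way to reach that simplification than the paper's direct substitution, but the argument is otherwise identical.
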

\begin{proof}

Using Lemma~\ref{lemma:integral} and the Leibniz Theorem for derivation under the integral sign, we get
\begin{equation*}
    \frac{\partial \Delta ( \Psi )}{\partial \Psi} = \Delta_1 + \Delta_2 
\end{equation*}
where
\begin{equation*}
\begin{split}
    \Delta_1 &= \eta [(1-\Psi)\sin^2\vartheta -x\sin \vartheta \cos \vartheta ] f(\mathcal{B}(\Psi)) \frac{\partial \mathcal{B}(\Psi)}{\partial \Psi} \\
    &\frac{\partial \mathcal{B}(\Psi)}{\partial \Psi}=\frac{\cos \vartheta}{\sin \vartheta} \\
    \Delta_2 &= \int\limits_{-\infty}^{\mathcal{B}(\Psi)}\frac{\partial}{\partial \Psi}\eta [(1-\Psi)\sin^2\vartheta -x_2\sin \vartheta \cos \vartheta] f(x_2) dx_2\\
    &= \int_{-\infty}^{\mathcal{B}(\Psi)}-\eta\sin^2\vartheta f(x_2) dx_2 \cr
\end{split}
\end{equation*}
Clearly $\Delta_2\le 0$. It therefore suffices to analyze the sufficient condition $\Delta_1\le0$ in order to conclude the proof. 

\textbf{Case 1:} $\cos\vartheta= 0$, where $\Delta_1= 0\implies \frac{\partial \Delta ( \Psi )}{\partial \Psi}<0~~\forall \Psi$.

\textbf{Case 2:} $\cos\vartheta> 0$. Since $f(x)>0$ (a density function), $\Delta_1\le0$ iff the first multiplicand in the expression describing $\Delta_1$ above is non-negative. By substituting $\mathcal{B}(\Psi)$ into this term we get:
\begin{equation*}
\begin{split}
   (1-&\Psi)\sin^2\vartheta -\mathcal{B}(\Psi)\sin \vartheta \cos \vartheta \\
    &= (1-\Psi)\sin^2\vartheta - 
    [ (\Psi - 1) \cos \vartheta + 1] \cos \vartheta \\
    &= 1 - \cos \vartheta - \Psi 
\end{split}
\end{equation*}
Clearly $\forall ~\Psi > (1 - \cos\vartheta)$ this term is negative. 

\textbf{Case 3:} $\cos\vartheta< 0$. Using the same line of reasoning, now $\Psi > (1 - \cos\vartheta)\implies \Delta_1<0$ since $\frac{\partial \mathcal{B}(\Psi)}{\partial \Psi}<0$. 
\end{proof}

Early in the training procedure we expect \textbf{Case 1}, when $\cos\vartheta > 0$ and $\bar\bw,\bw_t$ are positively correlated, to dominate SGD learning. This is because in a high dimensional space, two randomly picked vectors are expected to be almost orthogonal to each other, and therefore only a small step towards the optimal hypothesis is needed in order to satisfy this condition. Now the relevant condition is $\Psi > 1 - \cos\vartheta$, defining a range which includes almost all the training data with non-zero \emph{Difficulty Score}.

The condition on $\Psi$ in the theorem is necessary. To see this, we next show that when $\cos{\vartheta} > 0$ and $0<\Psi < 1 - \cos\vartheta$, there are cases for which the theorem does not hold. Similar construction exists when $\cos{\vartheta} < 0$ and $\Psi > 1 - \cos\vartheta$.

\begin{theorem}
\label{thm:low-psi-counter}
For all $w_t$ and when $\cos{\vartheta} > 0$, there exists $\mathcal{D}$ for which $\Delta(\Psi)$ is not monotonically decreasing with $\Psi$ in the range $[0, 1 - \cos{\vartheta}]$.
\end{theorem}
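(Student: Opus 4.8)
The plan is to reuse the decomposition of $\partial\Delta/\partial\Psi$ obtained in the proof of Theorem~\ref{thm:hingeglobal}. Writing $F(b)=\int_{-\infty}^{b}f(x_2)\,dx_2$ for the cumulative of the marginal $f$ appearing in Lemma~\ref{lemma:integral}, that proof gives, to first order in $\eta$,
\begin{equation*}
\frac{\partial\Delta(\Psi)}{\partial\Psi}=\Delta_1+\Delta_2,\qquad
\Delta_1=\eta\,(1-\cos\vartheta-\Psi)\,\frac{\cos\vartheta}{\sin\vartheta}\,f(\mathcal{B}(\Psi)),\qquad
\Delta_2=-\eta\sin^2\vartheta\,F(\mathcal{B}(\Psi))\le 0 .
\end{equation*}
When $\cos\vartheta>0$ (so $\vartheta\in(0,\tfrac{\pi}{2})$) and $\Psi\in(0,1-\cos\vartheta)$, the prefactor $(1-\cos\vartheta-\Psi)\cos\vartheta/\sin\vartheta$ is strictly positive, so $\Delta_1>0$ wherever $f(\mathcal{B}(\Psi))>0$ — exactly the term that was $\le 0$ in the regime $\Psi>1-\cos\vartheta$. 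Hence $\partial\Delta/\partial\Psi(\Psi)>0$ precisely when $f(\mathcal{B}(\Psi))/F(\mathcal{B}(\Psi))>\sin^3\vartheta/\bigl[(1-\cos\vartheta-\Psi)\cos\vartheta\bigr]$, and the whole task reduces to exhibiting a marginal density $f$ for which this inequality holds at some $\Psi_0\in(0,1-\cos\vartheta)$.

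To do that I would fix $\Psi_0=\tfrac12(1-\cos\vartheta)$, set $b_0=\mathcal{B}(\Psi_0)$, and take $f$ to be a narrow continuous bump of width $\epsilon$ placed just above $b_0$ (together with its mirror image near $-b_0$, included only so that $\iE[x_2]=0$ and $\bar\bw$ can remain the SVM optimum). A distribution $\mathcal{D}$ on $\iR^{d+1}\times\{\pm1\}$ realizing this $\Psi$-conditional marginal — taken the same for every $\Psi$, so the Leibniz computation above transfers verbatim — and having optimum $\bar\bw$ is then produced by reflection-symmetrizing under $(\bx,y)\mapsto(-\bx,-y)$. Evaluating at the $\Psi_1$ for which $\mathcal{B}(\Psi_1)$ equals the peak of the right bump: $f(\mathcal{B}(\Psi_1))=\Theta(1/\epsilon)$ while $F(\mathcal{B}(\Psi_1))\le 1$ and, for small $\epsilon$, $\Psi_1$ is close to $\Psi_0$ so $1-\cos\vartheta-\Psi_1$ stays bounded away from $0$; thus the left side of the inequality diverges as $\epsilon\to0$ while the right side stays bounded, giving $\partial\Delta/\partial\Psi(\Psi_1)>0$ with $\Psi_1\in(0,1-\cos\vartheta)$, so $\Delta$ is not monotonically decreasing on $[0,1-\cos\vartheta]$. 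One can also bypass differentiation: with $f$ a sharp bump at a value $c\in\bigl(\tfrac{1-\cos\vartheta}{\sin\vartheta},\,\sin\vartheta\bigr)$ — a nonempty interval since $\cos^2\vartheta<\cos\vartheta$ — the $\Psi^{\ast}$ solving $\mathcal{B}(\Psi^{\ast})=c$ lies in $(0,1-\cos\vartheta)$, and $\Delta$ has an upward jump there of size $\tfrac12\eta\bigl[(1-\Psi^{\ast})\sin^2\vartheta-c\sin\vartheta\cos\vartheta\bigr]+O(\eta^2)$, which is positive exactly because $c<\sin\vartheta$.

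The inequality chase itself is short; I expect the main obstacle to be the bookkeeping that promotes the one-dimensional $f$ to a legitimate $\mathcal{D}$ whose soft-margin optimum is genuinely $\bar\bw$, while (i) keeping the conditional marginal of $x_2$ independent of $\Psi$ so the Theorem~\ref{thm:hingeglobal} derivation applies, and (ii) keeping the $O(\eta^2)$ remainder of Lemma~\ref{lemma:integral} uniformly negligible in a neighbourhood of $\Psi_1$. Reflection symmetry secures the optimum in (i) and compact support of $f$ secures (ii); the one point that needs genuine care is checking that the mirror bump at $-b_0$, which lies strictly below $\mathcal{B}(\Psi)$ throughout a neighbourhood of $\Psi_1$, contributes only a smoothly varying term to $\Delta(\Psi)$ and therefore cannot cancel the $\Theta(1/\epsilon)$ boundary spike coming from the bump at $b_0$.
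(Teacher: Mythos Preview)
Your proposal is correct, and your second route (``bypass differentiation'') is essentially the paper's own argument. The paper fixes $0<\Psi_1<\Psi_2<1-\cos\vartheta$, chooses $f(x_2)=0$ for all $x_2\le\mathcal{B}(\Psi_1)$ so that $\Delta(\Psi_1)=0$ trivially, and then notes that the integrand $(1-\Psi_2)\sin^2\vartheta-x_2\sin\vartheta\cos\vartheta$ is decreasing in $x_2$ and equals $1-\cos\vartheta-\Psi_2>0$ at $x_2=\mathcal{B}(\Psi_2)$, hence is positive on $[\mathcal{B}(\Psi_1),\mathcal{B}(\Psi_2)]$, giving $\Delta(\Psi_2)>0$. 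This is exactly your ``upward jump when the bump enters the integration range'' observation, just phrased as a two-point comparison rather than a jump, and it never needs the density to be sharp.

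Your first route through the Leibniz decomposition $\Delta_1+\Delta_2$ also works but is longer than necessary: the $\Theta(1/\epsilon)$ scaling is there only to force $\Delta_1$ to beat $\Delta_2$, whereas the paper avoids the competition entirely by arranging $F(\mathcal{B}(\Psi_1))=0$. On the other hand, the extra care you flag---the mirror bump to keep $\bar\bw$ the genuine optimum, independence of the conditional marginal from $\Psi$, and uniform control of the $O(\eta^2)$ remainder---goes beyond what the paper actually verifies; the paper simply posits a marginal $f(x_2)$ and does not check that a full distribution $\mathcal{D}$ with soft-margin optimum $\bar\bw$ realizes it.
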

\begin{proof}
Let $0<\Psi_1<\Psi_2 < 1 - \cos\vartheta$. Assume that $f(x_2)=0 ~\forall x_2\le\mathcal{B}(\Psi_1)$, thus $\Delta (\Psi_1)=0$. It remains to show that $\Delta (\Psi_2)>0$. From Lemma~\ref{lemma:integral}
and neglecting second order terms in $\eta$
\begin{equation*}
\begin{split}
\Delta (\Psi_2) &\approx \eta \int_{\mathcal{B}(\Psi_1)}^{\mathcal{B}(\Psi_2)} \mathcal{J}(x_2)f(x_2) dx_2\\
\mathcal{J}(x_2)&=(1-\Psi_2)\sin^2\vartheta -x_2 \sin{\vartheta} \cos{\vartheta}
\end{split}
\end{equation*}
We next observe that $\mathcal{J}(x)>0$ $\forall x$ where $\mathcal{B}(\Psi_1)\le x\le\mathcal{B}(\Psi_2)$. This is because  $\mathcal{J}(x)$ is monotonically deceasing with $x$, and $\mathcal{I}(\mathcal{B}(\Psi_2))>0$ for $\Psi_2 < 1 - \cos\vartheta$. It thus follows that $\Delta (\Psi_2)>0$, which concludes the proof.
\end{proof}

\subsection{Convergence rate increases with \emph{local difficulty}}

In a similar manner to the case of linear regression and under the same assumptions, we show that when $\Psi$ is fixed, the convergence rate with respect to the local difficulty $\Upsilon$ is increasing, opposite to its trend with $\Psi$. 

As in Section~\ref{sec:regression-local}, we define:
\begin{equation*}
    \Delta (\Psi, \Upsilon) = \iE \left[ \frac{\bw_{t  + 1}\cdot \bar{\bw}}{\Vert \bw_{t  + 1}\Vert ~\Vert \bar{\bw} \Vert } - \frac{\bw_{t }\cdot \bar{\bw}}{\Vert \bw_{t }\Vert ~\Vert \bar{\bw} \Vert } \; \bigg{\vert} \; \Psi, \Upsilon \right]
\end{equation*}

\begin{theorem}
Assume that the gradient step size is small enough so that we can neglect second order terms $O(\eta^2)$. Assume further that $\cos{\vartheta}\geq  0$. Fixing $\Psi$ and $\forall\Psi$, the expected convergence rate is  monotonically increasing with $\Upsilon$ for every $\Upsilon > 0$.
\end{theorem}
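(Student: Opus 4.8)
The plan is to collapse the conditional rate $\Delta(\Psi,\Upsilon)$ to a closed-form linear expression in $\Upsilon$ by reusing the first-order integrand already computed in Lemma~\ref{lemma:integral}, and then simply differentiate. The key observation is that conditioning on \emph{both} $\Psi$ and $\Upsilon$ pins down the first two coordinates of $\bx$ in the working coordinate system of (\ref{eq:coors}): the constraint $L(\bX,\bar\bw)=\Psi$ forces $x_1=1-\Psi$ as in (\ref{eq:x-vec}), while (for $\Upsilon>0$, i.e.\ in the active-hinge regime where the gradient step is $\bs=\bx y\neq 0$) the constraint $L(\bX,\bw_t)=\Upsilon$ forces $\bx\cdot\bw_t=1-\Upsilon$, which by the computation in the proof of Lemma~\ref{lemma:gradlim} reads $(1-\Psi)\cos\vartheta+x_2\sin\vartheta=1-\Upsilon$ and hence determines $x_2=\bigl(1-\Upsilon-(1-\Psi)\cos\vartheta\bigr)/\sin\vartheta$ (here $0<\vartheta\le\pi/2$ since $\cos\vartheta\ge0$, so $\sin\vartheta>0$; the case $\vartheta=0$ is already at the optimum). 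The remaining coordinates $x_3,\dots,x_{d+1}$ stay free, but — exactly as in the proof of Lemma~\ref{lemma:integral} — to first order in $\eta$ the integrand $\mathcal{I}$ depends on $\bx$ only through $x_1$ and $x_2$, the other coordinates entering solely through $\Vert\bx\Vert^2$ in the $O(\eta^2)$ term. Therefore $\Delta(\Psi,\Upsilon)=\mathcal{I}+O(\eta^2)$ with $\mathcal{I}$ evaluated at this pinned pair $(x_1,x_2)$, and no integration against the density $f$ is needed.

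Concretely, I would take from Lemma~\ref{lemma:integral} the first-order form $\mathcal{I}=\eta\bigl[(1-\Psi)\sin^2\vartheta-x_2\sin\vartheta\cos\vartheta\bigr]$, substitute $x_2\sin\vartheta=1-\Upsilon-(1-\Psi)\cos\vartheta$, and collect terms using $\sin^2\vartheta+\cos^2\vartheta=1$ to obtain
\[
\Delta(\Psi,\Upsilon)=\eta\bigl[(1-\cos\vartheta-\Psi)+\Upsilon\cos\vartheta\bigr]+O(\eta^2).
\]
Neglecting $O(\eta^2)$ as assumed, $\partial\Delta(\Psi,\Upsilon)/\partial\Upsilon=\eta\cos\vartheta\ge0$ precisely when $\cos\vartheta\ge0$, and this conclusion is uniform in $\Psi$ and valid for every $\Upsilon>0$; when $\cos\vartheta=0$ the rate is $\Upsilon$-independent, consistent with weak monotonicity. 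Note also that $\partial\Delta(\Psi,\Upsilon)/\partial\Psi=-\eta<0$ from the same formula, which is the ``opposite trend'' between local and global difficulty announced at the start of this subsection, mirroring the linear-regression picture.

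I do not anticipate a genuine obstacle here — the argument is really a change of variable from $x_2$ to $\Upsilon$ inside the active region $x_2<\mathcal{B}(\Psi)$ of Lemma~\ref{lemma:gradlim} — so the main care is bookkeeping: I must justify that conditioning on $(\Psi,\Upsilon)$ leaves the first-order value of $\Delta$ deterministic (independent of the free coordinates $x_3,\dots,x_{d+1}$ and of $\mathcal{D}$), and that the $O(\eta^2)$ remainder, which does depend on $\Vert\bx\Vert^2$ and hence on the conditional law of those coordinates, can be dropped uniformly in $\Upsilon$ over the relevant range — the same ``$\eta$ small enough'' caveat already invoked in Lemma~\ref{lemma:integral} and Theorem~\ref{thm:hingeglobal}.
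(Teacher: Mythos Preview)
Your proposal is correct and follows essentially the same route as the paper: both pin down $x_1=1-\Psi$ and $x_2=\mathcal{X}(\Psi,\Upsilon)$ from the two constraints, feed this into the first-order integrand $\mathcal{I}$ of Lemma~\ref{lemma:integral}, and differentiate to get $\partial\Delta(\Psi,\Upsilon)/\partial\Upsilon=\eta\cos\vartheta\ge0$. Your version is in fact a touch more explicit --- you carry the algebra one step further to the closed form $\Delta(\Psi,\Upsilon)=\eta[(1-\cos\vartheta-\Psi)+\Upsilon\cos\vartheta]+O(\eta^2)$ and note the $\vartheta=0$ degenerate case --- but the argument is the same.
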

\begin{proof}

From Fig.~\ref{fig:hinge-plain} we see when $\Psi,\Upsilon$ are given, the projection of data point $\bx$ onto $X_1\times X_2$ is a point where $x_1=1-\Psi$, and 
\begin{equation*}
\begin{split}
    (\cos\vartheta,&\sin\vartheta)\cdot(1-\Psi,x_2)=1-\Upsilon \\
    &\implies x_2=\mathcal{X}(\Psi,\Upsilon)=\frac{\Psi-1}{\tan\vartheta}+\frac{1-\Upsilon}{\sin\vartheta}
\end{split}
\end{equation*}

In the same manner used to prove Lemma~\ref{lemma:integral}, we can show that
\begin{equation*}
    \Delta (\Psi, \Upsilon) = \eta [
    (1-\Psi)\sin^2\vartheta -\mathcal{X}(\Psi,\Upsilon) \sin{\vartheta} \cos{\vartheta}] + O(\eta^2)
\end{equation*}
It follows that
\begin{equation*}
    \frac{\partial \Delta ( \Psi, \Upsilon)}{\partial \Upsilon} = \eta\cos{\vartheta}\geq  0
\end{equation*}
which concludes the proof.
\widowpenalty=10000
\end{proof}

\section{Summary and Discussion}

This paper offers the first theoretical investigation of curriculum learning, in the context of convex optimization. In its simplest form, curriculum learning can be viewed as a variation on stochastic gradient descent, where easy examples are more frequently sampled at the beginning of training. In our first contribution we defined how to measure example difficulty, using its loss with respect to the optimal hypothesis (global {Difficulty Score}). In our second contribution we analyzed two representative convex optimization problems - binary classification with hinge-loss minimization, and linear regression. For these two problems we showed that curriculum learning, with an initial bias in favor of training points whose loss with respect to the \emph{optimal hypothesis} is \textbf{lower}, accelerates learning. We also showed that when the global {Difficulty Score} is fixed, convergence is accelerated when preferring training points whose loss with respect to the \emph{current hypothesis} (local {Difficulty Score}) is \textbf{higher}. 

These theoretical results can direct us towards the development of new practical methods which will incorporate both global and local scores in order to balance between easy and hard examples. One simple approach for achieving this would be to control the pace of the curriculum schedule according to the local score. More sophisticated algorithms can combine biases based on both scores. Our results also suggest that the correlation between local and global scores can predict whether methods that favor currently easier examples, like SPL, or methods that favor currently hard examples, like hard example mining, will perform better in specific tasks. 

For example, when learning from noisy data, we expect to see high correlation between the local and global difficulty scores, and therefore preference towards examples with low local score will also bias towards examples with low global score. In such cases SPL, which gives preference to examples with lower local score, can improve convergence, based on our theoretical analysis. On the other hand, if the local and global scores are not correlated, hard data mining is likely to perform better based on our theoretical analysis.

\section*{Acknowledgements}

This work was supported in part by a grant from the Israeli Science Foundation (ISF) and by the Gatsby Charitable Foundations.

\bibliography{bib}
\bibliographystyle{iclr2019}

\appendix

\section{Bayesian Formulation}
\label{app:bayesian}

The main result in Theorem~\ref{theorem:1} depends on the assumption that the two possible labels when the difficulty score is fixed $y(\bx\vert_{\displaystyle{\Psi}})$: $y_1(\bx)=\bx\cdot\bar\bw+{\Psi}$ and $y_2(\bx)=\bx\cdot\bar\bw-{\Psi}$, are equally likely: $f_{{\cal D}(\iX)}(\bx,y_i(\bx)) = \frac{1}{2}f(r,\vartheta,\Phi)$. We now describe how this assumption can be relaxed in a Bayesian framework. 

Let $\bv$ denote an additional (vector) hyper-parameter of the distribution  ${\cal D}(\iX)$, such that $f_{{\cal D}(\iX)} = f_\bv(r,\vartheta,\Phi,y)$. Let $q(\bv)$ denote the distribution of $\bv$. Assume that the conditional marginal distribution of the data over $v$ does not depend on the label, namely
\begin{equation}
\int f_v(r,\theta,\Phi,y\vert_{\displaystyle{\Psi}}) q(v) dv = C f(r,\vartheta,\Phi) ~~~\forall \Psi
\label{eq:bayesian}
\end{equation}
for some constant $C$. It follow that the marginal conditional distribution of the data satisfies the required condition
\begin{equation*}
f_{{\cal D}(\iX)}(\bx,y\vert_{\displaystyle{\Psi}}) =\int f_\bv(\bx,y\vert_{\displaystyle{\Psi}}) q(v) dv  =f(r,\vartheta,\Phi,y)\vert_{y\in\{y_1,y_2\}}=\frac{1}{2}f(r,\vartheta,\Phi)
\end{equation*}

Thus assumption (\ref{eq:bayesian}) suffices for Theorem~\ref{theorem:1} to hold true in a Bayesian framework, when taking the average over all hyper-parameter values.

\section{Normalization of parameters vector}
\label{app:general_normalization}

Throughout the analysis in Section~\ref{sec:hinge} we assumed the constraint $\Vert \bw \Vert = 1$, but the results also apply to any norm $A$ where $\Vert \bw \Vert = A$. To see this, let us define $\bx'=A\bx$. Define the following distribution $\mathcal{D'}$ on $\bX'$
\begin{equation*}
    \forall \bx',\; y: \; \mathcal{D'}(\bx',y) = \mathcal{D} (A\bx, y) 
\end{equation*}
We note that
\begin{equation*}
\begin{split}
\mathop{\argmin}_{\bw,~s.t.\Vert\bw\Vert=A}\max(1-(\bx\cdot\bw) y , 0) = &\mathop{\argmin}_{\bw,~s.t.\Vert\bw\Vert=1}\max(1-(A\bx\cdot\bw) y , 0) \\
=&\mathop{\argmin}_{\bw,~s.t.\Vert\bw\Vert=1}\max(1-\bx'\cdot\bw) y , 0)
\end{split}
\end{equation*}
The latter is the problem we have analyzed for any distribution on the training data, including $\mathcal{D'}$. Thus the theorems we have proved hold true for this problem as well.


\end{document}